\documentclass{article}
\usepackage{arxiv}
\usepackage{balance}
\usepackage{graphicx}
%

\usepackage{amsmath}
\usepackage{amssymb}
\usepackage{amsthm}
\usepackage[noabbrev]{cleveref}
\usepackage{float}
\usepackage{forest}
\usepackage{mathtools}
\usepackage{amsfonts}
\usepackage{tikz}
\usepackage{tabularx}
\usepackage{multirow}

\newcommand{\ADL}{\ensuremath{\mathrm{ADL}}}
\newcommand{\ALC}{\ensuremath{\mathcal{ALC}}}
\newcommand{\concepts}{\ensuremath{\mathrm{X}}}
\newcommand{\roles}{\ensuremath{\mathrm{R}}}
\newcommand{\names}{\ensuremath{\mathrm{N}}}
\newcommand{\id}{\ensuremath{\mathtt{id}}}
\newcommand{\abook}{\ensuremath{\mathcal{A}}}
\newcommand{\tbook}{\ensuremath{\mathcal{T}}}
\newcommand{\akb}{\ensuremath{\mathcal{K}}}

\newcommand{\assert}[1]{\ensuremath{:^{#1}}}

\newcommand{\ite}[3]{\ensuremath{\left(#1\mathbf{?}#2\mathbf{:}#3\right)}} 
\newcommand{\cond}[3]{\ensuremath{[#3]\left(#1\! \mid\! #2\right)}}
\newcommand{\idcond}[2]{\ensuremath{(#1\! \mid\! #2)}}

\newcommand{\expect}{E}

\newcommand{\nmlt}{\ensuremath{\preceq}}
\newcommand{\eala}{\ensuremath{\approx}}

\newcommand{\interp}{\ensuremath{\mathcal{B}}}
\newcommand{\finterp}{\ensuremath{\mathcal{B}^f}}
\newcommand{\lang}[1]{\ensuremath{\mathcal{L}_#1}}
\newtheorem{theorem}{Theorem}[section]
\newtheorem{lemma}[theorem]{Lemma}

\newtheorem{definition}[theorem]{Definition}

\renewenvironment{proof}{\paragraph{Proof:}}{\hfill$\square$}

\title{Aleatoric Description Logic for Probabilistic Reasoning (Long Version)}

%
\author{Tim French\\
The University of Western Australia\\
{\tt tim.french@uwa.edu.au}
\and
Thomas Smoker\\
The University of Western Australia\\
{\tt thomas.smoker@uwa.edu.au}}

\begin{document}


\maketitle

\begin{abstract}
  Description logics are a powerful tool for describing ontological knowledge bases. 
  That is, they give a factual account of the world in terms of individuals, concepts and relations. 
  In the presence of uncertainty, such factual accounts are not feasible, and a subjective or epistemic approach is required. 
  Aleatoric description logic models uncertainty in the world as aleatoric events, 
  by the roll of the dice, where an agent has subjective beliefs about the bias of these dice. 
  This provides a subjective Bayesian description logic, 
  where propositions and relations are assigned probabilities according to what a rational agent would bet, 
  given a configuration of possible individuals and dice. 
  Aleatoric description logic is shown to generalise the description logic ALC, 
  and can be seen to describe a probability space of interpretations of a restriction of ALC where all roles are functions. 
  Several computational problems are considered and model-checking and consistency checking algorithms are presented.
  Finally, aleatoric description logic is shown to be able to model learning, 
  where agents are able to condition their beliefs on the bias of dice according to observations.
\end{abstract}

\keywords{Probabilistic Reasoning \and Belief Representation \and Learning Agents}

\section{Introduction}
\label{sect:introduction}
Description logics \cite{baader2003description} give a formal foundation for ontological reasoning: 
reasoning about factual aspects of the world. 
However, many reasoning tasks are performed in the presence of incomplete or uncertain information,
so a reasoner must apply some kind of belief model to approximate the true state of the world.
This work investigates the application of description logics to describing uncertain and incomplete concepts, 
following the recent development of aleatoric modal logic \cite{icla}. 
The term aleatoric has its roots in the Latin {\em aleator}, meaning dice player. 
and it is this origin that motivates this work. 
Concepts are not simply described as matters of fact, 
but can be more generally described as {\em reasonable bets}.
While a person may definitely have a virus or not, 
a virus test kit with 95\% accuracy is effectively a role of a dice, 
and often this aleatoric information is the only information available. 
In the medical domain observed symptoms and anatomical structure may be considered as fact,
but diagnosis and prognosis have an inherent degree of uncertainty.
Therefore concepts may be  modelled using probabilities corresponding to
a rational bet that the concept holds true, 
along the lines of the Dutch book argument of Ramsey \cite{ramsey1926truth} 
and de Finetti \cite{deFinetti}.

The fundamental assumption of this work is that an agent models the world
{\em aleatorically}, where events correspond to the roll of dice, 
and the bias of the dice is treated {\em epistemically}. 
That is, the agent has prior assumptions about the bias of the dice, 
and may refine these assumptions through observing the world.

Aleatoric description logic aims to model reasoning in 
uncertain and subjective knowledge settings \cite{halpern:2003,josang2016subjective}. 
However, aleatoric description logic only offers a useful approximation of subjective reasoning, 
rather than an ontic representation of reasoning that an actual agent may perform.
Imagine a rational agent that maintains an incomplete model of the world in their mind. 
When the agent is asked to make a judgement, 
they simulate a complete representation of the world by sampling the pertinent unknowns of the model, 
applying likelihoods coherent with their past experience. 
The judgement that is ``most likely'' becomes the position of the agent. 
There is a great variation in the way such models could be formed, and sampled against. 
Aleatoric description logic takes a pragmatic approach that assumes these models may be represented by dice based games 
(often refer to as role playing games, such as Advanced Dungeons and Dragons \cite{gygax1989}), 
where intricate sets of dice determine the unknowns of the model.  

Aleatoric description logic takes an approach
where the probabilistic and logical aspects of the knowledge base
are completely unified, rather than several other approaches where 
these are independent facets of the knowledge base 
\cite{ceylan,riguzzi2015probabilistic,lukasiewicz}.
Therefore all concept and roles are represented by ``dice rolls'' 
corresponding to an agent’s beliefs on the likely configuration of the world. 

An advantage of this ``probability first'' approach is that aleatoric description logic
is naturally able to model learning via Bayesian conditioning over complex observations 
(i.e. logical formula).
Aleatoric modal logic is introduced in \cite{icla}, 
where the semantics are presented along with a proof theoretic calculus. 
This paper extends that syntax and semantics to an aleatoric description logic, 
following as an analogue of the correspondence between 
description logics and modal logics \cite{baader2003description}.
Additionally: 
it will be shown how the aleatoric semantics are actually a probability space of functional models; 
knowledge base semantics are presented; 
satisfiability of acyclic axiom schemas is shown to be in PSPACE; 
and finally a learning framework will be presented where beliefs may be 
updated according to new observations, via Bayesian conditioning.

\section{Propositional description logic}\label{sect:ALC}

Description logic gives mathematical description of that which is (i.e. things that exist).
The logical formalisation allows us to determine when two concepts are equivalent, when one concept subsumes another, and when one concept can be extended to include another.

There are many variations of description logic with different expressivity and reasoning complexity \footnote{{\tt http://www.cs.man.ac.uk/~ezolin/dl/}}, 
but we will focus on the general purpose framework \ALC. 

The syntax of complex concepts in \ALC\ is is given by the following recursion:
$$ C\ ::=\ \top\ |\ A\ |\ C\sqcap C\ |\ \lnot C\ |\ \exists \rho.C$$
where $A\in \concepts$ is some atomic concept, and $\rho\in \roles$ is a role.
This syntax enables the expression of complex concepts built from atomic concepts.

\begin{definition}\label{def:ALCSemantics}
An {\em interpretation} of \ALC\ is a tuple $\mathcal{I} = (I, c, r)$ where
\begin{itemize}
  \item $I$ is a set of {\em individuals},
  \item $c:\concepts\longrightarrow\wp(I)$,
  \item $r:\roles\longrightarrow\wp(I\times I)$,
\end{itemize}
The semantics of \ALC\ are given with respect to a state $i\in I$ (the pointed interpretation ($\mathcal{I}_i$)) so that:
$\mathcal{I}_i\models \top$ always; $\mathcal{I}_i\models A$ iff $i\in c(A)$;
$\mathcal{I}_i\models C\sqcap D$ iff $\mathcal{I}_i\models C$ and $\mathcal{I}_i\models D$;
$\mathcal{I}_i\models \lnot C$ iff $\mathcal{I}_i\not\models C$; and
$\mathcal{I}_i\models \exists \rho. C$ iff for some $j$ such that $(i,j)\in r(\rho)$, $\mathcal{I}_j\models C$.
\end{definition}

The logic $\ALC_1$ is a particular semantic restriction of \ALC\ where the roles relation is constrained to be functional:
i.e. $\forall i\in I,\ \forall \rho\in\roles,\ |\{j\ |\ (i,j)\in r(\rho)\}|=1$.
The logic $\ALC_1$ is very basic and its concept satisfiability problem has a linear reduction to propositional logic. 
However, it provides an important foundation for the expressive relationship between $\ALC$ and $\ADL$. 
\begin{lemma}
  There is a computable linear transformation $\pi$ that maps $\ALC_1$ to propositional logic,
  and a computable linear transformation $\tau$ that maps interpretations of $\ALC_1$ 
  to interpretations of propositional logic such that for all pointed interpretations 
  $\mathcal{I}_i$ and all $\ALC$ formulas, $C$, $\mathcal{I}_i\models C$ if and only if $\mathcal{I}_i^\tau\models C^\pi$.
\end{lemma}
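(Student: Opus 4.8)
The plan is to exploit the fact that in $\ALC_1$ every role is a function, so from a fixed evaluation point $i$ each finite word $w=\rho_1\cdots\rho_n\in\roles^*$ leads to a \emph{unique} individual. Hence for every such $w$ and every atom $A\in\concepts$ the assertion ``$A$ holds at the individual reached from $i$ along $w$'' is a well-defined proposition, and I would introduce one propositional variable $p_{w,A}$ for exactly these assertions. The translation $\pi$ is then a path-indexed recursion $\pi_w$ over concepts, and one sets $C^{\pi} := \pi_{\varepsilon}(C)$ for the empty word $\varepsilon$.

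Concretely: $\pi_w(\top)=\top$, $\pi_w(A)=p_{w,A}$, $\pi_w(C\sqcap D)=\pi_w(C)\wedge\pi_w(D)$, $\pi_w(\lnot C)=\lnot\pi_w(C)$, and $\pi_w(\exists\rho.C)=\pi_{w\rho}(C)$ --- the existential simply disappears into the path, because the $\rho$-successor is unique. An easy induction shows that every node of the concept tree contributes at most one connective or atom and every $\exists$-node contributes none, so (counting each distinct variable $p_{w,A}$ as one symbol) $|C^{\pi}|\le|C|$, and the number of variables occurring in $C^\pi$ is at most the number of atomic-concept occurrences in $C$; thus $\pi$ is linear and evidently computable.

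For $\tau$, let $\mathcal{I}=(I,c,r)$ be a functional interpretation; write $f_{\rho}:I\to I$ for the function whose graph is $r(\rho)$, and for $w=\rho_1\cdots\rho_n$ set $f_w=f_{\rho_n}\circ\cdots\circ f_{\rho_1}$, with $f_{\varepsilon}=\id$. Define $\mathcal{I}_i^{\tau}$ to be the propositional interpretation in which $p_{w,A}$ is true iff $f_w(i)\in c(A)$. In general this assigns truth values to infinitely many variables, but each value is read off directly from $r$ and $c$, so $\tau$ is computable, and only the finitely many variables appearing in a given $C^{\pi}$ are relevant.

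Correctness follows by structural induction on $C$, strengthened to: for all $w\in\roles^*$, $\mathcal{I}_{f_w(i)}\models C$ iff $\mathcal{I}_i^{\tau}\models\pi_w(C)$. The cases $\top$ and $A$ are immediate from the definition of $\tau$, and $\sqcap$, $\lnot$ from the induction hypothesis. For $C=\exists\rho.D$: functionality of $r$ gives $\mathcal{I}_{f_w(i)}\models\exists\rho.D$ iff $\mathcal{I}_{f_{\rho}(f_w(i))}\models D$, and $f_{\rho}(f_w(i))=f_{w\rho}(i)$, so by the induction hypothesis at the word $w\rho$ this holds iff $\mathcal{I}_i^{\tau}\models\pi_{w\rho}(D)=\pi_w(\exists\rho.D)$. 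Instantiating $w=\varepsilon$ yields the lemma (and, since every propositional valuation is realised by some tree-shaped functional interpretation, also the advertised linear reduction of $\ALC_1$ concept satisfiability to propositional satisfiability). I expect no real obstacle here: the content is bookkeeping --- pinning down the indexing of variables by role-words, getting the composition order in $f_w$ right, and being comfortable with $\tau$ producing a computable but generally infinite valuation; the only point deserving care is the linearity claim, which is why I would phrase $\pi$ so that each path-indexed atom counts as a single symbol.
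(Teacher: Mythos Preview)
Your proposal is correct and follows essentially the same approach as the paper: both introduce propositional atoms indexed by pairs (role-word, atomic concept), translate $\exists\rho$ by extending the word index, define the propositional valuation by tracing the unique role-paths from $i$, and conclude by a routine structural induction. The only differences are cosmetic---you append roles to the word where the paper prepends, and you define $\tau$ directly via the composed functions $f_w$ where the paper unfolds it recursively---and your treatment of the linearity claim is in fact more explicit than the paper's.
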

\begin{proof}
  For every word $w\in\roles^*$, and for every concept $X\in\concepts$, let $X_w$ be a propositional atom.
  For every word $w\in\roles^*$, we define a recursive transformation, 
  $(\dot)^w$, from $\ALC_1$ formulas to propositional formulas as follows:
  $$\top^w = \top\quad(C\sqcap D)^w = C^w\land D^w\quad (\lnot C)^w = \lnot D^w\quad\exists\rho. C = C^{\rho w}.$$
  Given a pointed $\ALC_1$ interpretation $\mathcal{I}_i = (I,c,r)$, 
  $\mathcal{I}_i^\tau$ is a subset of propositional atoms from $\{X_w\ |\ w\in\roles^*,\ X\in\concepts\}$, 
  defined recursively as follows:
  $$\mathcal{I}_i^\tau = \{X_\epsilon\ |\ i\in c(X)\}\cup 
  \bigcup_{\rho\in\roles} \{X_{\rho w}\ |\ \exists j,\ (i,j)\in r(rho),\ X_\rho\in\mathcal{I}_j^\tau\}$$
  where $\epsilon$ is the empty word. 
  It is straightforward to show $\mathcal{I}_i\models C$ if and only if $\mathcal{I}_i^\tau\models C^\epsilon$
  by induction over the complexity of formulas.
\end{proof}

\section{Aleatoric description logic}\label{sect:ADL}

This section presents the core syntax and semantics for Aleatoric Description Logic (\ADL).
This is a generalisation of standard description logics, such as \ALC, 
in the same sense that complex arithmetic is a generalisation of real-valued arithmetic:
the true-false/0-1 values of description logics are extended to the closed interval $[0,1]$.

\subsection{Syntax}
The syntax of \ADL\ varies from that of \ALC\  in a number of ways:
there is a ternary operator, {\em if-then-else}, in place of the typical Boolean operators,
and a {\em marginalisation} operator in place of the normal role quantifiers. 
These operators add expressivity, but also better capture the aleatoric intuitions of the logic.

The syntax for \ADL\ is specified with respect to a set of {\em atomic concepts}, $\concepts$ and a set of {\em roles}, $\roles$:
$$ \alpha\ ::=\ \top\ |\ \bot\ |\ A\ |\ \ite{\alpha}{\alpha}{\alpha}\ |\ \cond{\alpha}{\alpha}{\rho} $$
where $A\in\concepts$ is an atomic concept, and $\rho\in\roles$ is a role. 
Let the set of $\ADL$ formulas generated by this syntax be $\lang{\ADL}$.
This syntax uses non-standard operators and the following terminology is used: 
$\top$ is {\em always}; 
$\bot$ is {\em never}; 
$A$ is some named concept that may hold for an individual;
$\ite{\alpha}{\beta}{\gamma}$ is {\em if }$\alpha$ {\em then } $\beta$, {\em else }$\gamma$ (the {\em conditional operator}); and 
$\cond{\alpha}{\beta}{\rho}$ is $\rho$ {\em is} $\alpha$ {\em given} $\beta$ (the {\em marginalisation operator}).

We also identify a special role $\id\in\roles$ referred to as {\em identity}, 
which essentially refers to different possibilities for the one individual,
and write \idcond{\alpha}{\beta} in place of \cond{\alpha}{\beta}{\id}.

In these semantics every thing is interpreted as a probability dependent only on the individual: 
$\top$ always has probability 1.0 and $\bot$ always has probability 0.0;
an atomic concept $A$ has some probability that is dependent only on the current individual;
$\ite{\alpha}{\beta}{\gamma}$ has the probability of $\beta$ given $\alpha$ or $\gamma$ given not $\alpha$; 
and $\cond{\alpha}{\beta}{\rho}$ is the probability of $\alpha$ given $\beta$ over the set of individuals in the probability distribution corresponding to $\rho$.

\subsection{Motivation}\label{subsec:motivation}
Prior to giving formal semantics, this section gives an intuition for the new operators.
Imagine an agent, referred to as the {\em aleator}, who treats everything as a gamble much like a game of dice. 
But as a shrewd gambler, the aleator seeks to understand the dice, and know the probabilities better than anyone else.
In the aleator's mental model of the world, every individual and every entity can be modelled by a bag of dice:

\smallskip
To see if entity $e$ satisfies concept $C$, the aleator selects the bag of dice labelled $e$, 
    and then finds the die labelled $C$ in that bag. Every face of that die is either assigned $\top$ or $\bot$.
    The aleator rolls the die, and if $\top$ labels the upmost face, then $e$ satisfies $C$, {\em this time}. 
    Everything is a probability, and the next time the aleator plays this game, the result may be different. 
    But it's the probability of winning (or coming up $\top$) the aleator cares about, not some ideal truth.
    This is a simple game (the game of $C$ at $e$), and the more complex expressions in the syntax can build more complex games.

\smallskip    
To see if entity $e$ satisfies the proposition $\ite{\alpha}{\beta}{\gamma}$, the aleator first plays the game of $\alpha$ at $e$.
    If it comes up $\top$, then the aleator plays the game of $\beta$ at $e$. Otherwise, the aleator plays the game of $\gamma$ at $e$,  
    and in either case, the result of the second ``game'' is the final result.

\smallskip    
To see if entity $e$ satisfies the proposition $\cond{\alpha}{\beta}{\rho}$, 
    the aleator finds the die labelled $\rho$ in the bag of dice labelled $e$.
    This die is different to the dice labelled by concepts: instead of faces assigned $\top$ and $\bot$, 
    the faces of this die are labelled with other entities.
    To evaluate the proposition the aleator rolls the $\rho$ die, and notes the entity $e'$ on the upmost face.
    They then play the game of $\beta$ at $e'$ (i.e. by taking the bag of dice labelled $e'$ and evaluating $\beta$).
    If this comes up $\top$, the aleator then plays the game of $\alpha$ at $e'$ and this becomes the final result.
    If the game of $\beta$ at $e'$ came up $\bot$, then the aleator restarts the game, 
    going back to the bag of $e$ dice and rolling the $\rho$ die. 
    This process continues for as long as necessary, and if there is never a result, it defaults to $\top$.

\bigskip
This process maps every formula to either $\top$ or $\bot$ with a determined probability.
Thus it gives an aleatoric model of an agent's beliefs.
However, we presume that the agent is uncertain as to the true bias of the dice. 
They are simply Bayesian priors that an agent may refine through observation. 

To put this in the context of the initial example, 
the aleator wants to know the chance they have been exposed to a virus,
given they were in contact with someone who had a fever.
The proposition the aleator wants to evaluate is as follows:
$$\ite{\textit{virus}}{\top}{\cond{\textit{infectious}}{\textit{fever}}{\textit{contact}}}$$
``If they were already (asymptomatically) infected, they remain infected. 
Otherwise, given a contact selected from the population of people who have a fever, 
what is the probability that the contact is infected with the virus''.

To evaluate this they roll a die, to see if they already have the virus (there may be a 1\% chance).
In the cases where they didn't already have the virus, 
the aleator rolls a die to select a random person from the population of contacts,  
and then checks the person has a fever by rolling a die that represents the likelihood the person has a fever.
This is repeated until a random febrile contact is selected. 
Then a final die is rolled to determine whether that person is infectious. 

Aleatoric description logic does not describe the true world, 
nor does it describe a typical rational agent.
But it does describe a certain kind of rational agent, 
who can tolerate uncertainty and incorporate new evidence into their 
belief model in a simple and mathematically elegant way.

\subsection{Probabilistic Semantics}\label{semantics}

The \ADL\ is interpreted over an {\em Aleatoric Belief Model},
that is based on the probability model of \cite{icla} and  
the probability structures defined in \cite{halpern:2003}. 
It directly corresponds to the mental model of the aleator, consisting of the bags of dice.

\newcommand{\PD}{\ensuremath{\mathrm{PD}}}
\begin{definition}\label{def:PD}
  Given a set $S$, we use the notation $\PD(S)$ to notate the set of {\em probability distributions} over $S$, 
  where $\mu\in \PD(S)$ implies:
  $\mu:S\longrightarrow[0,1]$; and $\sum_{s\in S} \mu(s) = 1$.
\end{definition}

We use the notion of an {\em aleatoric belief model} as an interpretation of \ADL.
\begin{definition}\label{def:interpretation}
  Given a set of atomic concepts $\concepts$, and a set of roles $\roles$, an {\em aleatoric belief model} 
  is specified by the tuple $\interp =(I, r, \ell)$, 
  where:
  \begin{itemize}
    \item $I$ is a set of possible individuals.
    \item $r:\roles\times I\longrightarrow \PD(I)$ assigns for each role $\rho\in\roles$ and each individual $i\in I$, 
      a probability distribution $r(\rho,i)$ over $I$.
      We will typically write $\rho(i,j)$ in place of $r(\rho,i)(j)$. 
    \item For the role $\id$, we include the additional constraint:
      for all $i,j,k\in I$, $\id(i,j)>0$ implies $\id(j,k)=\id(i,k)$.
    \item $\ell:I\times\concepts\longrightarrow[0,1]$ gives the likelihood, 
      $\ell(i,C)$ of an individual $i$ satisfying an atomic concept $C$. 
      We will write $C(i)$ in place of $\ell(C,i)$.
  \end{itemize}
  Given some $i\in I$, we let $\interp_i$ be referred to as a {\em pointed aleatoric belief model}. 
\end{definition}

\begin{definition}\label{def:semantics}
  Given an aleatoric belief model $\interp=(I,r,\ell)$, some $i\in I$, and some $\alpha\in\ADL$ we specify the 
  probability $\interp$ assigns $i$ satisfying $\alpha$, $\interp_i(\alpha)$, recursively.
  We use the abbreviation, where $\rho\in\roles$: 
  $E^\rho_i\alpha = \sum_{j\in I}\rho(i,j)\interp_j(\alpha)$.
  Then:
  \begin{eqnarray*}
    \interp_i(\bot) = 0 &&
    \interp_i(\top) = 1\qquad
    \interp_i(C) = C(i)\\
    \interp_i(\ite{\alpha}{\beta}{\gamma}) &=& \interp_i(\alpha).\interp_i(\beta)+(1-\interp_i(\alpha)).\interp_i(\gamma)\\
    \interp_i(\cond{\alpha}{\beta}{\rho}) &=& \frac{\sum_{j\in I}\rho(i,j)\interp_j(\alpha)\interp_j(\beta)}{E^\rho_i\beta}\ \textrm{if}\ E^\rho_i\beta >0\\
    \interp_i(\cond{\alpha}{\beta}{\rho}) &=& 1,\ \textrm{if}\ E^\rho_i\beta=0 
  \end{eqnarray*}
\end{definition}

These semantics match the motivation of the aleator and the bag of dice. 
The probability of the game of $C$ at $i$ coming up $\top$ is $C(i)$.
When evaluating $\ite{\alpha}{\beta}{\gamma}$ at $i$, 
note the probabilities for the games of $\alpha$, $\beta$ and $\gamma$ at $i$ are all independent.
Therefore the values of the probabilities may be multiplied together: 
the probability of $\alpha$ and $\beta$ coming up $\top$ at $i$ is $\interp_i(\alpha).\interp_i(\beta)$, 
the probability of $\alpha$ coming up $\bot$ and $\gamma$ coming up $\top$ at $i$ is $(1-\interp_i(\alpha)).\interp_i(\gamma)$, 
and since $\alpha$ coming up $\top$ and $\alpha$ coming up $\bot$ are mutually exclusive 
(the game of $\alpha$ is only played once), these probabilities may be added together.
Finally, when evaluating $\cond{\alpha}{\beta}{\rho}$ we are simply taking the expectation of $\alpha$ over the distribution for $\rho$,
marginalised by the expectation of $\beta$ over the distribution for $\rho$.

An important property of these semantics is the weak independence assumption:
{\em All formulas of \ADL\ are contingent only on the individual at which they are evaluated}.
This means that two formulas evaluated at the same individual may be viewed as independent probabilistic events. 

Table~\ref{tab:abbreviations} gives a set of abbreviations familiar in the context of description logics.
\begin{table}
  \begin{center}
  \caption{\small Some abbreviations of operators in $\ADL$.}\label{tab:abbreviations}
    \begin{tabular}{|c|c|}
      \hline
      \begin{tabular}{|c|c|c|}
         \hline
         {\bf term} &{\bf formula} & {\bf interpretation}\\
         \hline
         $\alpha\sqcap\beta$ & $\ite{\alpha}{\beta}{\bot}$  & $\interp_i(\alpha).\interp_i(\beta)$\\
         \hline
         $\alpha\sqcup\beta$ & $\ite{\alpha}{\top}{\beta}$ & $\interp_i(\alpha) + \interp_i(\beta) - \interp_i(\alpha).\interp_i(\beta)$ \\
         \hline
         $\lnot\alpha$ & $\ite{\alpha}{\bot}{\top}$ & $1 - \interp_i(\alpha)$\\
         \hline
         $\alpha\Rightarrow\beta$ & $\ite{\alpha}{\beta}{\top}$ & $1 - \interp_i(\alpha)+\interp_i(\alpha).\interp_i(\beta)$\\
         \hline
         $\expect_\rho\alpha $&$ \cond{\alpha}{\top}{\rho}$ & $\sum_{j\in I}\rho(i,j).\interp_j(\alpha)$\\
         \hline
         $\exists\rho.\alpha $&$ \lnot\cond{\bot}{\alpha}{\rho}$ & $1$ if $\expect_\rho\alpha\neq 0$,  $0$ otherwise.\\  
         \hline
       \end{tabular}
       &    
       $
       \alpha^{\frac{n}{m}}= \left\{\begin{array}{ll} 
         1 & \textrm{ if }n=0\\ 
         0 &\textrm{ if } m<n\\ 
         \ite{\alpha}{\alpha^{\frac{n-1}{m-1}}}{\alpha^{\frac{n}{m}}} &\textrm{ if }n<m
       \end{array}\right.
       $\\
    \hline
  \end{tabular}
  \end{center}
\end{table}

The abbreviation in the right column of the table corresponds to a process of repeated sampling: 
Where $n,m\in\omega$, $\alpha^\frac{n}{m}$ corresponds to the likelihood of $\alpha$ being sampled at least $n$ times out of $m$. 
(A similar abbreviation can be defined for $\alpha$ coming up $\top$ \textit{exactly} $n$ times out of $m$.)
Note that this does not describe a probability or frequency, but an event. 
So $\alpha^{\frac{4}{5}}$ does not mean $\alpha$ is sampled at least 80\% of the time. 
Instead it describes the event of $\alpha$ being sampled $4$ times out of $5$, 
which would be quite likely (0.88) if $\alpha$ had probability 0.8, 
and unlikely (0.19) if $\alpha$ had probability 0.5.
This formalism can encode degrees of belief in an elegant way.
If an agent were to perform an action only if they believed $\alpha$ very strongly, 
one might set $\alpha^{\frac{9}{10}}$ as a precondition for the action, 
and if an agent were informed of a proposition $\beta$ by another agent 
who is considered unreliable,
they may update their belief base with the proposition $\beta^{\frac{2}{3}}$.

These operators may not appear logical: $\sqcap$ is not idempotent, 
and appears similar to the product t-norm of fuzzy logic \cite{zadeh}.
However, Section~\ref{sect:express} shows that these new operators
are inherently probabilistic and represent the process of 
reasoning over a probability space of description logic models.
Furthermore, restricting the concept probabilities to be 0 or 1, 
it can be seen that the semantic interpretation of 
$\sqcap$, $\lnot$ and $\exists\rho$ agrees with the 
standard description logic semantics, so classical description logic
can be seen as a special case of aleatoric description logic.

\subsection{Example}\label{sbsect:example}

For example, suppose we have three agents: {\tt Hector}, {\tt Igor} and {\tt Julia}. 
They each may have a virus ({\it V}), or not, and they also may have a fever ({\it F}), 
whether they have the virus or not. 
For each agent, we suppose that there are two possible individuals ({\bf PI}), 
one with the virus (e.g. {\tt Hector1}) and one without (e.g. {\tt Hector0}). 
For each possible individual, there is probability of them having a fever, 
which is naturally higher for possible individuals with the virus.
Each agent will occasionally come into contact with another agent, 
and the identity of this agent is described by the probability distribution {\tt contact}.
Finally, for each possible individual there is the probability of them being the actual agent ({\tt id}).


We can calculate the probability of an agent being newly exposed to the virus: 
$$\expect(\lnot V\sqcap \cond{V}{F}{\mathtt c})\nmlt\mathtt{exp}$$

An interpretation can be given this scenario. 
For each agent, we suppose that there are two possible individuals ({\bf PI}), 
one with the virus (e.g. {\tt Hector1}) and one without (e.g. {\tt Hector0}).
Note that the weighted probabilities of these agents satisfy the constraints of the A-Book, \abook. 

The probabilities for this scenario are given in Table~\ref{tab:ex-init-prob}, 
and a graphical representation is given in Figure~\ref{fig:example}. 
\newcommand{\hO}{\ensuremath{\mathtt{H_0}}}
\newcommand{\hI}{\ensuremath{\mathtt{H_1}}}
\newcommand{\iO}{\ensuremath{\mathtt{I_0}}}
\newcommand{\iI}{\ensuremath{\mathtt{I_1}}}
\newcommand{\jO}{\ensuremath{\mathtt{J_0}}}
\newcommand{\jI}{\ensuremath{\mathtt{J_1}}}
\begin{table}[ht]
\centering
    \caption{\small Initial probabilities for agent, contacts, virus and symptoms}\label{tab:ex-init-prob}
    \scalebox{1.0}{
     \begin{tabular}{|c|c|c|c|c|c|c|c|c|c|}\hline
        {\bf PI} & {\tt id} &{\it V} & {\it F} & \hO & \hI & \iO & \iI & \jO & \jI\\\hline
        {\tt Hector}$_0$ & 0.9 & 0.0 & 0.1 & 0.0 & 0.0 & 0.15 & 0.15 & 0.21 & 0.49\\\hline
        {\tt Hector}$_1$ & 0.1 & 1.0 & 0.6 & 0.0 & 0.0 & 0.15 & 0.15 & 0.21 & 0.49\\\hline
        {\tt Igor}$_0$ & 0.5 & 0.0 & 0.3 & 0.04 & 0.36 & 0.0 & 0.0 & 0.18 & 0.42\\\hline
        {\tt Igor}$_1$ & 0.5 & 1.0 & 0.8 & 0.04 & 0.36 & 0.0 & 0.0 & 0.18 & 0.42\\\hline
        {\tt Julia}$_0$ & 0.3 & 0.0 & 0.2 & 0.04 & 0.36 & 0.3 & 0.3 & 0.0 & 0.0\\\hline
        {\tt Julia}$_1$ & 0.7 & 1.0 & 0.9 & 0.04 & 0.36 & 0.3 & 0.3 & 0.0 & 0.0\\\hline
    \end{tabular}
  }
\end{table}
\begin{figure}
\centering
  \caption{\small A graphical example of the virus transmission scenario.}\label{fig:example}
    \scalebox{1.0}{
    \begin{tikzpicture}
      \tikzset{every node/.style={inner sep=0pt}}
      \draw (2,4.5) node[circle,draw](h0) {\small{ $\begin{array}{c}\hO\\{\mathit V}:0.0\\{\mathit F}:0.1\end{array}$ }};
      \draw (2,1.5) node[circle,draw](h1) {\small{ $\begin{array}{c}\hI\\{\mathit V}:1.0\\{\mathit F}:0.6\end{array}$ }};
      \draw (6,6) node[circle,draw](i0) {\small{ $\begin{array}{c}\iO\\{\mathit V}:0.0\\{\mathit F}:0.3\end{array}$ }};
      \draw (8,4) node[circle,draw](i1) {\small{ $\begin{array}{c}\iI\\{\mathit V}:1.0\\{\mathit F}:0.8\end{array}$ }};
      \draw (8,2) node[circle,draw](j0) {\small{ $\begin{array}{c}\jO\\{\mathit V}:0.0\\{\mathit F}:0.2\end{array}$ }};
      \draw (6,0) node[circle,draw](j1) {\small{ $\begin{array}{c}\jI\\{\mathit V}:1.0\\{\mathit F}:0.9\end{array}$ }};
      \draw[dashed] (h0) -- (h1) node[midway, right](pt-h) {$\mathtt{id}$} node[near start, left] {0.1} node[near end,left] {0.9};
      \draw[dashed] (i0) -- (i1) node[midway, below left](pt-i) {$\mathtt{id}$} node[pos=0.6, right] {0.5} node[pos=0.05, right] {0.5};
      \draw[dashed] (j0) -- (j1) node[midway, above left](pt-j) {$\mathtt{id}$} node[pos=0.4, right] {0.3} node[pos=0.95,right] {0.7};

      \draw[thick,<->] (pt-h) -- (pt-i) node[midway, below] {$\mathtt{c}$} node[near start, above] {0.4} node[near end,above] {0.3};
      \draw[thick,<->] (pt-i) -- (pt-j) node[midway, right] {$\mathtt{c}$} node[near start, left] {0.6} node[near end,left] {0.6};
      \draw[thick,<->] (pt-j) -- (pt-h) node[midway, above] {$\mathtt{c}$} node[near start, below] {0.7} node[near end,below] {0.4};
    \end{tikzpicture}
    }
\end{figure}

Interpreting this for {\tt Hector}, we see the probability {\tt Hector} was newly exposed to the virus is approximately 0.7.
The working for this is shown in Table~\ref{tab:calc}.
\begin{table}
  \begin{center}
 \caption{A calculation of the chance of Hector being newly exposed with the virus, 
  after a chance encounter with a person with a fever.}\label{tab:calc} 
  \begin{tabular}{|c|c|}\hline
    \iO & $F_{\iO} = 0.3,\quad(V\sqcap F)_{\iO} = 0.0,\quad \mathtt{c}(\hO,\iO) = \mathtt{c}(\hI,\iO)= .15$\\\hline
    \iI & $F_{\iI} = 0.8,\quad(V\sqcap F)_{\iO} = 0.8,\quad\mathtt{c}(\hO,\iO) = \mathtt{c}(\hI,\iO)= .15$\\\hline
    \jO & $F_{\jO} = 0.2,\quad(V\sqcap F)_{\jO} = 0.0,\quad\mathtt{c}(\hO,\jO) = 0.21,\quad \mathtt{c}(\hI,\jO)= .49$\\\hline
    \jI & $F_{\jI} = 0.9,\quad(V\sqcap F)_{\jO} = 0.9,\quad\mathtt{c}(\hO,\jO) = 0.21,\quad  \mathtt{c}(\hI,\jO)= .49$\\\hline
    \hO & $V_0 = 0.0,\quad\mathtt{id}_0 = 0.9,\quad\cond{V}{F}{\mathtt{c}} = 
      \frac{\sum_{x=\iO}^{\jI}\mathtt{c}(\hO,x)\cdot(V\sqcap F)_x}{\sum_{x = \iO}^{\jI}\mathtt{c}(\hO,x).F_x} = 0.78$\\\hline
   \hI & $V_1 = 1.0,\quad\mathtt{id}_1 = 0.1,\quad\cond{V}{F}{\mathtt{c}} = 
      \frac{\sum_{x=\iO}^{\jI}\mathtt{c}(\hI,x)\cdot(V\sqcap F)_x}{\sum_{x = \iO}^{\jI}\mathtt{c}(\hI,x) \cdot F_x} = 0.78$\\\hline
    $\mathtt{H}$ & $\expect(\lnot V\sqcap \cond{V}{F}{\mathtt c}) = \sum_{x=0}^1 (1-V_x) \cdot \mathtt{id_x}.\cond{V}{F}{\mathtt{c}} = 0.7$\\\hline
  \end{tabular}
  \end{center}
\end{table}

\section{Aleatoric Knowledge Base Semantics}

A description logic knowledge base is defined as {\ensuremath{\mathcal{K = \langle T, A \rangle}}}; 
where {\ensuremath{\mathcal{K}}} is a knowledge base; 
{\ensuremath{\mathcal{T}}}, a \textit{TBox} is a set of axioms on the properties of concepts, known as intensional assertions; 
and {\ensuremath{\mathcal{A}}}, an \textit{ABox}, is a set of axioms on the groundings of concepts, 
called extensional assertions \cite{calvanese}. 

The aleatoric belief models of Section~\ref{semantics} describe a complete interpretation for formulas of aleatoric description logic,
and so requires a notion of a closed world.
For the representations of an agent's subjective knowledge in an open world, 
it is only possible to record a subset of what the agent knows and believes,
and consider the class of all models that support this belief set.
Just as description logics are applied to knowledge bases consisting of assertional axioms (A-Boxes) and terminological axioms,
aleatoric description logics are applied to {\em aleatoric knowledge bases} 
consisting of assertional axioms (A-Books) and terminological axioms (T-Books).

An aleatoric knowledge base is defined over the same signature of 
atomic concepts $\concepts$, and roles $\roles$, including ${\mathtt{id}}$.
Additionally there is a set of {\em named individuals}, $\names$, 
which may be thought of as special concepts for grounding assertions and framing queries.
In line with the epistemic nature of these knowledge bases each named 
individual can be any one of a number of {\em possible individuals}, 
and the distribution of these possible individuals is represented by the role $\mathtt{id}$.

As with \ALC\ we have terminological axioms and assertional axioms.
{\em Aleatoric terminological axioms} or {\em T-Books} describe rules that are universally true for all individuals,
and thus provide a non-probabilistic intensional definition of the concepts and roles in the logic.
{\em Aleatoric assertional axioms} or {\em A-Books} describe subjective extensional information 
by listing the probabilities with which individuals satisfy given concepts and roles.
It is not the case that {\em T-Books} describe {\em concept inclusion} nor {\em subsumption} as \textit{TBoxes} do in \ALC, 
as these concepts do not make sense when considering a set of weakly independent propositions.
Instead {\em T-Books}, provide a means to {\em constrain strength of belief}. 
The semantics are given in Definition~\ref{def:nmlt}.

\begin{definition}\label{def:nmlt}
  The {\em aleatoric terminological axioms} have the form:
$$\begin{array}{cl} \alpha\nmlt\beta&\alpha\textrm{ is no more likely than }\beta\\
  \alpha\eala\beta &\alpha\textrm{ is exactly as likely as }\beta.
\end{array}
$$
Given an aleatoric belief model $\interp = (I, R, \ell)$, 
$\interp$ {\em satisfies} $\alpha\nmlt\beta$ (written $\interp\models\alpha\nmlt\beta$), 
if and only if for all $i\in I$, $\interp_i(\alpha)\leq\interp_i(\beta)$;
and $\interp$ {\em satisfies} $\alpha\eala\beta$ (written $\interp\models\alpha\eala\beta$) 
if and only if $\interp\models\alpha\nmlt\beta$ and $\interp\models\beta\nmlt\alpha$.
A {\em T-Book} is a set of aleatoric terminological axioms.
\end{definition}

These axioms place universal constraints on the likelihoods of aleatoric formulas being true.
For example we might include an axiom $\textit{first}\nmlt\textit{place}$, 
meaning coming first in a race is no more likely than placing (coming first, second or third).
Alternatively, we could define placing precisely as coming first, second or third, 
via the axiom $\textit{place}\eala\textit{first}\sqcup\textit{second}\sqcup\textit{third}$, 
and then $\textit{first}\nmlt\textit{place}$ is implicitly true.

\begin{definition}\label{def:abook}
  The {\em aleatoric assertional axiom} (or simply assertions) have the form: 
  \begin{itemize}
    \item $a\assert{p} \alpha$, where $a\in \names$, $p\in[0,1]$ and $\alpha\in\ADL$ asserts that 
      individual $a$ satisfies concept $\alpha$, with probability $p$.
    \item $(a,b)\assert{p}  \rho$, where $a, b\in \names$, $p\in[0,1]$ and $\rho\in \roles$ 
      asserts that individual $b$ satisfies the role $\rho$ for $a$ with probability $p$. 
  \end{itemize}
  An {\em a-book} $\mathcal{A}$ is a set of aleatoric assertional axioms, 
  and $\mathcal{A}$ is a {\em well-formed a-book} if for every $a\in N$, for every $\rho\in R$, 
  $\sum\{p\ |\ (a,b)\assert{p}\rho\} \leq 1$.
  An {\em A-Book}, $\abook$ is {\em simple} if for all aleatoric assertional axioms $\sigma\in\abook$ of the form
  $a\assert{p} \alpha$, it is the case that $\alpha$ is an atomic concept.
\end{definition}

While an A-Book is existentially quantified, T-Books are universally quantified and consequently a very powerful formalism.
Therefore, it is useful to consider a restriction on T-Books referred to as an {\em acyclic T-Book}.
\begin{definition}\label{def:t-book}
  A concept $C$ is an {\em atom} if $C\in\concepts\cup\{\top,\bot\}$ (i.e. $C$ is an atomic concept, always, or never).
  A terminological axiom is {\em simple} if it has the one of the forms 
  \begin{itemize}
    \item $C\eala \ite{D}{E}{F}$ where $C,\ D,\ E$ and $F$ are all atoms.
    \item $C\eala \cond{D}{E}{\rho}$ where $C,\ D$ and $E$ are all atoms.
  \end{itemize}
  A {\em simple T-Book}, $\tbook$, is a T-Book consisting only of simple terminological axioms.
  A simple T-Book, $\tbook$, is {\em acyclic} if there is no sequence of concepts $C_0,\hdots,C_n$ where:
  \begin{itemize}
    \item for all $i=1,\hdots,n$, either:
      \begin{itemize}
        \item there is some $C\eala\ite{D}{E}{F}\in\tbook$, where $C_{i},C_{i-1}\in\{C,D,E,F\}\cap\concepts$;
        \item there is some $C\eala\cond{D}{E}{\rho}\in\tbook$, where $C_{i},C_{i-1}\in\{C,D,E\}\cap\concepts$;
      \end{itemize}
    \item there is some $i$ where $C\eala\cond{D}{E}{\rho}\in\tbook$ and $C_i,C_{i-1}\in\{C,D,E\}\cap\concepts$;
    \item $C_0 = C_n$.
  \end{itemize}
\end{definition}

\newcommand{\faircoin}{\ensuremath{C_{\textit{fair}}}}
As a brief example of a useful terminological axiom the following axiom 
constrains the concept $\faircoin$ to be a fair coin: $\faircoin\eala\lnot\faircoin$.
Now for all individuals, one can assume that there is a concept, $\faircoin$, 
available that has precisely a half chance of coming up $\top$.

A simple T-Book and a simple A-Book make a simple aleatoric knowledge base.
\begin{definition}\label{def:akb}
  An {\em aleatoric knowledge base} $\akb = (\abook,\tbook)$ 
  is a pair consisting of a set of assertional axioms $\abook$ 
  and a set of terminological axioms $\tbook$.
  If $\abook$ is a simple A-Book and $\tbook$ is a simple T-Book, 
  the $\akb$ is a {\em simple aleatoric knowledge base}, and if $\tbook$ is also acyclic
  $\akb$ is an {\em acyclic simple knowledge base}.
\end{definition}

An aleatoric knowledge base describes a belief, or subjective position of an agent, 
that can correspond to a number of different interpretations. 
The interpretations {\em satisfy} the aleatoric knowledge base if they make all the axioms true.  

\begin{definition}\label{def:satisfies}
  Given an aleatoric knowledge base $\akb=(\abook,\tbook)$ over the signature ($\concepts$, $\roles$, $\names$), 
  and an aleatoric belief model $\interp = (I, r, \ell)$ over the signature ($\concepts\cup \names$, $\roles$) {\em satisfies} $\mathcal{K}$ iff:
\begin{itemize}
  \item For every $a\in\names$, for every $i\in I$, $a(i)\in\{0,1\}$ and for all $i,j\in I$, $a(i)=1$ and $\id(i,j)>0$ implies 
    $a(j)=1$. That is, the names are absolute concepts, and two possibilities for a single individual will share a name. 
  \item For every terminological axiom $\alpha\nmlt\beta\in\tbook$, for every $i\in I$, 
    it follows that $\interp_i(\alpha)\leq\interp_i(\beta)$.
  \item For every terminological axiom $\alpha\eala\beta\in\tbook$, for every $i\in I$, 
    it follows that $\interp_i(\alpha)=\interp_i(\beta)$.
  \item For every assertion $a\assert{p}\alpha\in\abook$, for every $i\in I$ with $a(i)=1$, it follows that $\interp_i(\expect\alpha)=p$.
  \item For every assertion $(a,b)\assert{p} \rho\in\abook$ for every $i\in I$ with $a(i)=1$, $\sum_{j\in I}\rho(i,j).b(j) = p$.
\end{itemize}
  We say that a knowledge base $\mathcal{K}$ is {\em consistent} if it is supported by at least one aleatoric belief model. 
\end{definition}

The following lemma is a useful simplification.
\begin{lemma}\label{lem:simple}
  Every aleatoric knowledge base $\akb = (\abook,\tbook)$ is equivalent to a simple aleatoric knowledge base, $\akb'$.
\end{lemma}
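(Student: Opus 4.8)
The plan is to adapt the Tseitin-style flattening familiar from propositional logic: for every non-atomic subformula occurring in $\akb$ I introduce a fresh atomic concept abbreviating it, together with a \emph{defining} terminological axiom, and then replace each occurrence of that subformula by its abbreviating concept. The key point that makes this work even underneath a marginalisation is that terminological axioms are universally quantified over all individuals, so a defining axiom $X\eala\phi$ forces $\interp_i(X)=\interp_i(\phi)$ at \emph{every} $i$ --- which is exactly what is needed to substitute $X$ for $\phi$ inside $\cond{\cdot}{\phi}{\rho}$, where $\phi$ is evaluated at the $\rho$-successors of the current individual, not at the current individual.

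Concretely I would define a recursive operation $\mathrm{flatten}(\alpha)=(X_\alpha,\Sigma_\alpha)$ returning an atom and a finite set of simple terminological axioms: it is the identity on atoms; on $\ite{\beta}{\gamma}{\delta}$ it recurses on $\beta,\gamma,\delta$ to get $(D,\Sigma_1),(E,\Sigma_2),(F,\Sigma_3)$, picks a fresh atom $X$, and returns $(X,\ \Sigma_1\cup\Sigma_2\cup\Sigma_3\cup\{X\eala\ite{D}{E}{F}\})$; and symmetrically on $\cond{\beta}{\gamma}{\rho}$, recursing on its first two arguments only. Since $\akb$ is finite this introduces finitely many pairwise distinct fresh concepts, and the defining axiom of each refers only to original atoms and to fresh concepts of proper subformulas, so the dependency order among fresh concepts is acyclic (it is the subformula order), regardless of whether $\tbook$ itself is acyclic. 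A routine induction on the structure of $\alpha$ --- using that every clause of Definition~\ref{def:semantics}, including the $E^\rho_i\beta=0$ case of marginalisation, computes $\interp_i(\cdot)$ as a fixed function of the values of the immediate subformulas and of the values $\{\rho(i,j)\}_{j\in I}$ --- then yields the two facts I need: (i) any belief model $\interp$ over the original signature extends, by interpreting the fresh concepts in the forced way (computed in subformula order), to a model of $\Sigma_\alpha$ with $\interp_i(X_\alpha)=\interp_i(\alpha)$ for all $i$; and (ii) every model of $\Sigma_\alpha$ already satisfies $\interp_i(X_\alpha)=\interp_i(\alpha)$ for all $i$.

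Given flattening, $\akb'=(\abook',\tbook')$ is built axiom by axiom. An assertion $a\assert{p}\alpha$ with $\alpha$ non-atomic becomes $a\assert{p}X_\alpha$, adding $\Sigma_\alpha$ to the T-Book; pointwise agreement of $X_\alpha$ and $\alpha$ is preserved by $\expect$, so $\interp_i(\expect X_\alpha)=\interp_i(\expect\alpha)$ and the assertion's meaning is unchanged. Role assertions $(a,b)\assert{p}\rho$ are copied verbatim (so well-formedness of the A-Book is preserved). A terminological axiom $\alpha\eala\beta$ becomes $\{X_\alpha\eala\ite{X_\beta}{\top}{\bot}\}\cup\Sigma_\alpha\cup\Sigma_\beta$, using $\interp_i(\ite{X_\beta}{\top}{\bot})=\interp_i(X_\beta)$; this clause is needed even when $\alpha,\beta$ are already atoms, since $C\eala D$ is not itself of simple form. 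The only genuinely new ingredient is the inequality $\alpha\nmlt\beta$, which a simple T-Book cannot contain directly. I eliminate it with a ``slack'' concept: add a fresh atom $Y$ and the axiom $X_\alpha\eala\ite{X_\beta}{Y}{\bot}$ (that is, $X_\alpha\eala X_\beta\sqcap Y$), together with $\Sigma_\alpha\cup\Sigma_\beta$. In any model this gives $\interp_i(X_\alpha)=\interp_i(X_\beta)\cdot\interp_i(Y)\le\interp_i(X_\beta)$, hence $\alpha\nmlt\beta$; conversely, given a model in which $\interp_i(\alpha)\le\interp_i(\beta)$ for all $i$, set $\interp_i(Y)=\interp_i(\alpha)/\interp_i(\beta)$ when $\interp_i(\beta)>0$ --- which lies in $[0,1]$ precisely because $\interp_i(\alpha)\le\interp_i(\beta)$ --- and $\interp_i(Y)=0$ otherwise, noting that then $\interp_i(\alpha)=0$ so the axiom still holds. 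Such extensions are always available because fresh atomic concepts are unconstrained: the $\id$-coherence condition of Definition~\ref{def:interpretation} and the name-rigidity conditions of Definition~\ref{def:satisfies} constrain only roles and members of $\names$.

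The equivalence is then assembled from facts (i) and (ii): the reduct to the original signature of any model of $\akb'$ satisfies each original axiom (apply (ii) to every replaced axiom), and every model of $\akb$ extends to a model of $\akb'$ (apply (i) to the Tseitin axioms together with the slack-variable assignment above). Since the fresh concepts never occur in $\akb$, this is the appropriate sense of equivalence, and in particular $\akb$ is consistent iff $\akb'$ is. The one step that is more than bookkeeping is precisely the removal of the $\nmlt$ axioms --- finding an $\eala$-encoding of an inequality inside the very restrictive simple fragment and verifying the boundary case $\interp_i(\beta)=0$. A minor additional care, depending on how strictly ``atomic concept'' is read in Definitions~\ref{def:abook} and~\ref{def:t-book}, is treating $\top$, $\bot$ and names as atoms when they occur directly as the argument of an assertion or axiom; this is harmless, since each such occurrence can be abstracted by a fresh concept with a trivial defining axiom such as $X\eala\ite{\top}{\top}{\top}$.
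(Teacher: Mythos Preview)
Your proof is correct and follows essentially the same approach as the paper's: Tseitin-style introduction of fresh atomic concepts $C_\alpha$ for non-atomic subformulas together with a slack atom (your $Y$, the paper's $E_\tau$) to encode $\alpha\nmlt\beta$ as $C_\alpha\eala\ite{C_\beta}{E_\tau}{\bot}$, with the back direction obtained by interpreting each fresh concept as the value of the formula it names and the slack as $\interp_i(\alpha)/\interp_i(\beta)$. You are in fact slightly more careful than the paper in two places: you cast $\alpha\eala\beta$ into the simple form $X_\alpha\eala\ite{X_\beta}{\top}{\bot}$ (the paper leaves it as $C_\alpha\eala C_\beta$, which is not one of the two simple shapes), and you explicitly treat the boundary case $\interp_i(\beta)=0$ in the slack assignment.
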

\begin{proof}
  For every terminological axiom $\tau\in\mathcal{T}$ introduce a fresh atomic concept $E_\tau$;
  for every non-atomic subformula $\alpha$ appearing in some terminological or assertional axioms $\tau\in\abook\cup\tbook$, 
  introduce a fresh atomic concept $C_\alpha$; 
  and for every atomic concept $\alpha\in\concepts$ appearing in some axiom $\tau\in\abook\cup\tbook$, 
  let $C_\alpha=\alpha$.  
  If $\alpha = \ite{\beta}{\gamma}{\delta}$, then let $\alpha^* = \ite{C_\beta}{C_\gamma}{C_\delta}$, 
  and if $\alpha = \cond{\beta}{\gamma}{\rho}$, $\alpha^* = \cond{C_\beta}{C_\gamma}{\rho}$.
  The T-Book, $\tbook'$ is then the set of axioms:
  $$\left\{\begin{array}{ll}
    C_\alpha\eala\alpha^* &|\ \alpha\textrm{ appears in }\tau\in\abook\cup\tbook\\
    C_\alpha\eala C_\beta &|\ \alpha\eala\beta\in\tbook\\
    C_\alpha\eala \ite{C_\beta}{E_\tau}{\bot} &|\ \tau=\alpha\nmlt\beta\in\mathcal{T}
  \end{array}\right\},$$
  the A-Book, \abook', is $$
  \left\{\begin{array}{ll}
    a\assert{p} C_\alpha & |\ a\assert{p}\alpha\in\abook\\
  (a,b)\assert{p}\rho & |\ (a,b)\assert{p}\rho\in\abook\end{array}\right\},$$
  and $\akb' = (\abook',\tbook')$. 
  It is easy to see that given any aleatoric belief model $\interp=(I,R,\ell)$ satisfying $\akb$, 
  a corresponding aleatoric belief model $\interp' = (I,R,\ell)$ may be defined to satisfy $\akb'$ 
  by setting, for all $i\in I$, for all $\alpha $ appearing in $\tau\in\abook\cup\tbook$, $\ell'(i,C_\alpha) = \interp_i(\alpha)$,
  and for all $i\in I$, for all $\tau = \alpha\nmlt\beta\in\mathcal{T}$ setting $\ell'(i,E_\tau) = \interp_i(\alpha)/\interp_i(\beta)$ 
  (this is guaranteed to be between 0 and 1, since $\interp$ satisfies $\alpha\nmlt\beta$).
\end{proof}

For an aleatoric knowledge the questions of interest are:
\begin{itemize}
  \item {\em Satisfiability:} Given an aleatoric knowledge base, $\akb = (\abook,\tbook)$, is it consistent.  
  \item {\em Concept Satisfiability:} Given an aleatoric knowledge base, $\akb = (\abook,\emptyset)$, is it consistent.  
  \item {\em Concept query:} Given an aleatoric knowledge base, $\akb = (\abook,\tbook)$, 
    what is the lower (upper) bound on the likelihood of a concept for some named individual.
\end{itemize}
The following subsection extend the example of Section~\ref{sbsect:example} to the aleatoric knowledge base semantics, 
and then the subsequent section will 
consider the complexity of answering these questions. 

\subsection{Example}\label{sbsect:akb-example}

The case from Section~\ref{sbsect:example} can now be expressed as an aleatoric knowledge base.
Now, rather than needing to assign a probability to every role and concept, 
only the propositions an agent has a genuine subjective position on are given.

For example, an aleatoric knowledge base could model that Hector is very likely not to have the virus;
Julia is likely to have the virus, Julia is very likely to have a fever and to it is likely that Hector came into contact with Julia.
Furthermore, a terminological axiom can specify the belief that a new exposure to the virus ($\mathtt{exp}$)
occurs if an agent did not already have the virus, but came into contact with some febrile person who did have the virus. 
Thus the aleatoric knowledge base $\mathcal{K} = (\{\mathcal{T}_1\},\{\mathcal{A}_1,\hdots,\mathcal{A}_4\}$ is:
$$
\begin{array}{ll}
  \mathcal{A}_1 & \mathtt{Hector}\assert{0.1} V\\ 
  \mathcal{A}_2 & \mathtt{Julia}\assert{0.7} V\\
  \mathcal{A}_3 & \mathtt{Julia}\assert{0.69} F\\
  \mathcal{A}_4 & (\mathtt{Hector},\mathtt{Julia})\assert{0.3} \mathtt{c}\\
  \mathcal{T}_1 & \expect(\lnot V\sqcap \cond{V}{F}{\mathtt c})\nmlt\mathtt{exp}
\end{array}
$$
This aleatoric knowledge base is satisfied by the interpretation presented in Section~\ref{sbsect:example}. 
However, many other interpretations would also satisfy $\mathcal{K}$. 
Igor, the other possible contact of Hector, does not appear in the knowledge base, 
so in the 70\% chance that Julia was not a contact of Hector, the actual contact is truly arbitrary. 
Therefore, the knowledge base semantics do not require one to take a position on propositions they have no information on (subjective or otherwise).
The aleatoric knowledge base can be used to answer queries. 
To determine if the knowledge base necessitates that there is a greater than 25\% chance of Hector being newly exposed to the virus,
the axiom $\mathtt{Hector}\assert{0.25}\mathtt{exp}$ can be inserted into the knowledge base, 
and consistency checking can be applied. This process is described in the following section. 



\subsection{Reasoning with Aleatoric Description Logic}\label{sbsect:akb-reason}

This section will consider computational properties of aleatoric description logic.
The particular questions considered are:
\begin{itemize}
  \item {\em Model Checking} Given an pointed belief model $B_i$ and some formula $\alpha$, what is the value of $B_i(\alpha)$ (the probability assigned to $\alpha$ by $B_i$).
  \item {\em Belief Set Consistency} Given a belief set $\mathcal{K} = (\mathcal{T},\mathcal{A})$, is there a belief model that agrees with the belief set on all axioms.
\end{itemize}

An aleatoric knowledge base may correspond to many aleatoric belief models, or possibly none.
The main question of interest is whether there is any interpretation that could possibly 
correspond to a given aleatoric knowledge base. 
However, by assigning flat priors to all unknown (or {\em ambivalent} concepts) 
one can define an interpretation and get a partial answer via model-checking. 

\begin{lemma}
  Given a pointed belief model $\interp_i$ consisting of $n$ possible individuals, and a formula $\alpha$ consisting of $m$ symbols, 
  the value $\interp_i(\alpha)$ can be computed in time $O(n^2m)$.
\end{lemma}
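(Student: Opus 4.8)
The plan is to prove this by structural induction on the formula $\alpha$, computing $\interp_i(\alpha)$ not just at the single point $i$ but simultaneously at every individual in $I$ — this is forced by the semantics, since the marginalisation operator $\cond{\beta}{\gamma}{\rho}$ evaluated at $i$ refers to $\interp_j$ for all $j$ in the support of $\rho(i,\cdot)$, and by the weak independence assumption the value $\interp_j(\alpha)$ depends only on $j$. So I would strengthen the claim to: for each subformula $\beta$ of $\alpha$, compute the vector $(\interp_j(\beta))_{j\in I}$ (length $n$), working bottom-up over the at most $m$ subformulas of $\alpha$.

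First I would set up the recursion over the subformula tree. For the base cases $\top$, $\bot$, and atomic concepts $C$, the vector is read directly off $\ell$ in $O(n)$ time. For $\ite{\beta}{\gamma}{\delta}$, given the three vectors already computed, each entry $\interp_j(\ite{\beta}{\gamma}{\delta}) = \interp_j(\beta)\cdot\interp_j(\gamma) + (1-\interp_j(\beta))\cdot\interp_j(\delta)$ costs $O(1)$, so $O(n)$ for the whole vector. For $\cond{\beta}{\gamma}{\rho}$, given the vectors for $\beta$ and $\gamma$, I must compute for each $i$ the quantities $\sum_{j\in I}\rho(i,j)\interp_j(\beta)\interp_j(\gamma)$ and $E^\rho_i\gamma = \sum_{j\in I}\rho(i,j)\interp_j(\gamma)$, then take the ratio (or output $1$ when the denominator is $0$). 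Each such sum is over $n$ terms, and there are $n$ values of $i$, so this subformula costs $O(n^2)$.

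Summing over all at most $m$ subformulas, the conditional/marginalisation nodes dominate at $O(n^2)$ each and the others cost $O(n)$, giving total time $O(n^2 m)$; the value $\interp_i(\alpha)$ is then the $i$-th entry of the top-level vector. I would also note the $\id$ constraint plays no role in the complexity — it is just a structural property of $r$ that is already encoded in the stored distribution. The correctness of the computed vectors is immediate from Definition~\ref{def:semantics} by induction, so there is essentially no mathematical obstacle here; the only thing that needs care is being explicit that one computes the whole vector over $I$ at each node rather than a single value, since a naive reading of ``$\interp_i(\alpha)$'' might suggest recursing only at $i$, which would not terminate correctly through the marginalisation operator. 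A secondary point worth stating is the cost model: arithmetic operations on the probabilities are assumed to take unit time (or the bound is in terms of arithmetic operations), which is the standard convention and should be flagged so the $O(n^2 m)$ bound is unambiguous.
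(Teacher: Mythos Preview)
Your proof is correct and is essentially the same argument the paper gives: apply the semantic definitions recursively, observing that each operator is $O(1)$ per individual except marginalisation, which is $O(n)$ per individual, and that the computation must be carried out at all $n$ individuals for each of the at most $m$ subformulas, yielding $O(n^2m)$. Your version is more explicit about the bottom-up vectorised computation and the unit-cost arithmetic model, but the underlying reasoning is identical.
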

\begin{proof}
  This computation is done by applying the semantic definitions recursively. 
  All operations can be done in constant time ($O(1)$) except marginalisation which is $O(n)$.
  As operations need to be done for every possible individual, and there are at most $m$ operations,
  the complexity has an upper bound of $O(n^2m)$.
\end{proof}

To be able to perform inference based on an aleatoric knowledge base, we must first determine if it is consistent 
(i.e. agrees with at least one aleatoric belief model). 
Given a simple aleatoric knowledge base $\mathcal{K}= (\mathcal{T}, \mathcal{A})$, where $\mathcal{T}$ is an acyclic t-book, 
it is possible to determine if $\mathcal{K}$ is satisfiable with complexity PSPACE. 
The case for non-acyclic t-books is left to future work.

The process for the satisfiability theorem is to build a system of polynomial equalities and inequalities
corresponding to the axioms in $\akb$. 
This system of constraints is satisfiable if and only if $\akb$ is satisfiable.
The system of equalities corresponds closely to the semantic interpretation of the axioms,
over a sufficiently large set of individuals, that are constrained by linear inequalities.
Given an aleatoric knowledge base $\akb=(\tbook,\abook)$, 
let $\Lambda^\akb$ be a set of variables, 
let $\Phi^\akb$ be a set of polynomial equalities over $\Lambda^\akb$, 
and let $\Psi^\akb$ be a set of linear inequalities over $\Lambda^\akb$. 
These sets are constructed as follows.

\newcommand{\ol}[1]{\ensuremath{\overline{#1}}}
As $\tbook$ is simple and acyclic, it will only contain axioms of the form: $C\eala\ite{D}{E}{F}$, $C\eala\cond{D}{E}{\rho}$ and $C\eala D$, 
where $C$, $D$, and $E$ are all atomic. 
Define the relation $\cong\subseteq\concepts^2$ over the set of atomic concepts by: 
\begin{enumerate}
  \item for all $C in\concepts$, $C\cong C$,
  \item $C\cong D$ and $D\cong E$, implies $C\cong E$,
  \item if $C\eala\ite{D}{E}{F}\in\tbook$, then $X\cong Y$ for all $X,Y\in\{C,D,E,F\}\cap\concepts$.
  \item if $C\eala\cond{D}{E}{\rho}\in\tbook$, and $C,D\in\concepts$, then $D\cong E$, and $E\cong D$
  \item if $C\eala D\in\tbook$ and $C,D\in\concepts$, then $C\cong D$, and $D\cong C$.
\end{enumerate}
It is clear that $\cong$ is an equivalence relation, 
so for every atomic concept $C\in\concepts$ let $\overline{C}$ be the corresponding equivalence class.
Let $\overline{\concepts}$ be the set of equivalence classes, $\overline{C}$, where $C\in\concepts$, 
Given $\ol{C},\ol{D}\in\overline{\concepts}$, and $\rho\in\roles$,
let $\rho^\#(\ol{C},\ol{C})$ be the 
number of axioms $E\eala\cond{F}{G}{\rho}$ that appear in $\tbook$, where $E\in \ol{C}$ and $\{F,G\}\cap c\neq\emptyset$.
Let $c\Rightarrow d$ if and only if for some $\rho\in\roles$,
$\rho^\#(\ol{C},\ol{D})>0$, or if $\ol{C}\in\names$ and $\ol{D}\notin\names$.
Then $(\overline{\concepts},\Rightarrow)$ is a directed acyclic graph.

For each node of the graph, $\ol{C}\in\ol{\concepts}$, define:
$$\#\ol{C}  = \sum_{\begin{array}{c}\rho\in\roles\\ \ol{D} \in\ol{\concepts}\end{array}}\rho^\#(\ol{D},\ol{C})$$
  so $\#\ol{C}$ is the number of TBook axioms that contain a concept related to $ c $ in the scope of a marginalisation operator.

For each $\ol{C}\in\ol{\concepts}$, for each atomic concept $D\in\ol{C}$, 
assign $\#\ol{C}$ different variables $x_1^D,\hdots, x^D_{\#\ol{C}}\in\Lambda^\akb$,
and add the inequalities $\{0\leq x_i^D\leq 1\ |\ D\in\ol{C},\ i =1,\hdots,\#\ol{C}\}$ to $\Psi^\akb$. 
The following equations are added to $\Phi^\akb$, supposing that for all $i$, $x^\top_i = 1$ and $x^\bot_i = 0$:
\begin{itemize}
  \item For each axiom of the form $C\eala(D?E:F)\in\tbook$ add to $\Phi^\akb$ the equations
    $$x^C_i = x^D_i.x^E_i+(1-x^D_i).x^F_i,\ \textrm{ for }i = 1,\hdots\#\ol{C},$$
    noting that it must be the case $D, E, F\in c \cup\{\top,\bot\}$.
  \item For each axiom of the form $C\eala D\in\tbook$, add to $\Phi^\akb$ the equations
    $x^C_i = x^D_i$ for $i = 1,\hdots, \#\ol{C}$.
\end{itemize}

For every $\rho\in\roles$, for every pair $\ol{C},\ol{D}\in \ol{\concepts}$, 
add the variables $\{r^{\ol{C}\rho\ol{D}}_{ij}\ |\ i = 1,\hdots\#\ol{C},\ j = 1,\hdots,\#\ol{D}\}$ to $\Lambda^\akb$,
and add the inequalities $\{0\leq r^{\ol{C}\rho\ol{D}}_{ij}\leq 1\ |\ i = 1,\hdots\#\ol{C},\ j = 1,\hdots,\#\ol{D}\}$ to $\Psi^\akb$.
The following equations are added to $\Phi^\akb$:
\begin{itemize}
  \item For every pair $\ol{C},\ol{D}\in\ol{\concepts}$, for every $\rho\in\roles$, for each $i = 1,\hdots,\#\ol{C}$, 
    add to $\Phi^\akb$ the equations $\sum_{j = 1}^{\#\ol{D}} r^{\ol{C}\rho\ol{D}}_{ij} = 1$.
  \item For every axiom $C\eala\cond{D}{E}{\rho}\in\tbook$, for $i = 1,\hdots,\#\ol{C}$, 
    add to $\Phi^\akb$ the equation:
    $$ \sum_{j=1}^{\#\ol{D}}r^{\ol{C}\rho\ol{D}}_{ij}.x_j^E.x_i^C = \sum_{j=1}^{\# d }r^{\ol{C}\rho\ol{D}}.x^D_j.x^E_j,$$
    noting that $E\in\ol{D}\cup\{\top,\bot\}$.
  \item To account for the case where $C\eala\cond{D}{E}{\rho}\in\tbook$, and $E$ is false, add the variables
    $\{e^{\ol{C}\rho\ol{D}}_{i}\ | \i = 1,\hdots,\#\ol{C}\}$ to $\Lambda^\akb$, along with the inequalities:
    $\{0\leq e^{\ol{C}\rho\ol{D}}_{i}\ | \i = 1,\hdots,\#\ol{C}\}$ to $\Phi^\akb$. 
    Note, these are not constrained to be less than 1. Then, add to $\Phi^\akb$ the equalities:
    $$(\sum_{j=1}^{\#\ol{D}}r^{\ol{C}\rho\ol{D}}_{ij}.x_j^E).e^{\ol{C}\rho\ol{D}}_i+x_i^C = 1.$$
    Therefore if $\sum_{j=1}^{\#\ol{C}}r^{\ol{C}\rho\ol{D}}_{ij}.x_j^E=0$ then $x_i^C$ must equal 1.
  \item The $\id$ relation has additional constraints which are captured by adding to $\Phi^\akb$, 
    for all $\ol{C}\in\ol{\concepts}$, for every $i,j,k = 1,\hdots,\#\ol{C}$, 
    the equality $r^{\ol{C}\id\ol{C}}_{ij} = r^{\ol{C}\id\ol{C}}_{ik}$.
\end{itemize}

Where $D,E\in\ol{C}$, for each $i=1,\hdots\#\ol{C}$, 
the variables $x^D_i$ and $x^E_i$ describe the probability of the concepts $D$ and $E$ holding at the same individual.
To build a complete model it is necessary to describe a correspondence between different partitions of concepts, 
as they will be describing different concepts for a common individual.

For every $\ol{C},\ol{D}\in\ol{\concepts}$, 
add the variables $\{e^{\ol{C}\ol{D}}_{ij}\ |\ i = 1,\hdots,\#\ol{C},\ j = 1,\hdots,\#\ol{D}\}$ to $\Lambda^\akb$,
and add the inequalities $\{0\leq e^{\ol{C}\ol{D}}_{ij}\leq 1\ |\ i = 1,\hdots,\#\ol{C},\ j = 1,\hdots,\#\ol{D}\}$ to $\Psi^\akb$.
The following equalities are added to $\Phi^\akb$:
\begin{itemize}
  \item For all $\ol{C}\in\ol{\concepts}$, for all $i = 1,\hdots,\#\ol{C}$, add $e^{\ol{C}\ol{C}}_{ii} = 1$.
  \item For all $\ol{C},\ol{D} \in\ol{\concepts}$, for all $i = 1,\hdots,\#\ol{C}$, $\sum_{j=1}^{\# d }e^{\ol{C}\ol{D} }_{ij} = 1$.
  \item For all $\ol{C}, \ol{D},\ol{E} \in\ol{\concepts}$, for all $i = 1,\hdots,\#\ol{C}$, 
    for all $j = 1,\hdots,\#\ol{D}$, for all $\rho\in\roles$,
    $$e^{\ol{C}\ol{D}}_{ij}.\sum_{k=1}^{\#\ol{E}}r^{\ol{D}\rho\ol{E}}_{jk} = 
    e^{\ol{D}\ol{C}}_{ji}.\sum_{k=1}^{\#\ol{E}}r^{\ol{C}\rho\ol{E}}_{ik}.$$
  \item For all $\ol{C}, \ol{D},\ol{E} \in\ol{\concepts}$, for all $i = 1,\hdots,\#\ol{C}$,
    $$\sum_{k=1}^{\#\ol{E}}\sum_{j=1}^{\#\ol{D}}r^{\ol{C}\rho\ol{D}}_{ij}.e^{\ol{D}\ol{E}}_{jk} = 
    \sum_{j=1}^{\#\ol{D}}\sum_{k=1}^{\#\ol{E}}r^{\ol{C}\rho\ol{E}}_{ik}.e^{\ol{E}\ol{D}}_{kj}.$$
\end{itemize}

It is left to represent the axioms in $\abook$.
For every $a\in\names$, let $\# a$ be the number of axioms in $\abook$ that $a$ appears in,
and for every $C\in\concepts$, for every $i = 1,\hdots,\# a$, add a fresh variable $n^{aC}_i$ to $\Lambda^\akb$.
For every $a\in\names$, for every $C\in\concepts$, for $i = 1,\hdots,\#a$ add the inequalities $0\leq n^{ac}_i\leq 1$
to $\Psi^\akb$.

For every $a\in\names$, for every $\rho\in\roles$, for every $\ol{C}\in\ol{\concepts}$, for $i = 1,\hdots,\# a$, 
for $j = 1,\hdots,\#\ol{C}$, 
add a variable $r^{a\rho C}_{ij}$ to $\Lambda^\akb$, and add the inequalities $0\leq r^{a\rho C}_{ij}\leq 1$ to $\Psi^\akb$.
For every $a,b\in\names$ for $i = 1,\hdots,\# a$, for $j = 1,\hdots,\# b$, add a variable $r^{a\id b}_{ij}$ to $\Lambda^\akb$,
add the inequalities $0\leq r^{a\id a}_{ij}\leq 1$ to $\Psi^\akb$, 
and for every $i,j,k=1,\hdots,\# a$, add the equality $r^{a\id a}_{ij} = r^{a\id a}_{ik}$ to $\Phi^\akb$.
\begin{itemize}
  \item For each axiom $a\assert{x} C$ in the simple A-Book $\abook$, for every $i = 1,\hdots, \# a$, 
    add the equality $\sum_{i = 1}^{\# a} r^{a\id a}_{ij}n^{ac}_j = x$ to $\Phi^\akb$.
  \item For each axiom $(a,b)\assert{x} \rho$, for every $i = 1,\hdots,\# a$ add the equality 
    $$\sum_{j = 1}^{\# a}r^{a\id a}_{ij}\sum_{k=1}^{\# b}r^{a\rho b}_{jk} = x$$
    to $\Phi^\akb$.
  \item For every $\ol{C}\in\ol{\concepts}$, for every $\rho\in\roles$, for each $i = 1,\hdots,\# a$, 
    add to $\Phi^\akb$ the equations 
    $$\sum_{j = 1}^{\#\ol{C}} r^{a\rho\ol{C}}_{ij}+\sum_{b\in \names}\sum_{j=1}^{\# b}r^{a\rho b}_{ij} = 1.$$
  \item For every axiom $C\eala\cond{D}{E}{\rho}\in\tbook$, for $i = 1,\hdots,\# a$, 
    add to $\Phi^\akb$ the equations:
    $$ (\sum_{j=1}^{\#\ol{D}}r^{a\rho\ol{D}}_{ij}.x_j^E + \sum_{b\in\names}\sum_{j=1}^{\# b} r^{a\rho b}n^{bE}_j).n_i^{aC} 
    = \sum_{j=1}^{\#\ol{D}}r^{a\rho\ol{D}}.x^D_j.x^E_j+\sum_{b\in\names}\sum_{j=1}^{\# b} r^{a\rho b}n^{bD}_j.n^{bE}_j.$$
  \item To account for the case where $C\eala\cond{D}{E}{\rho}\in\tbook$, and $E$ is false, add the variables
    $\{e^{a\rho\ol{D}}_{i}\ | \i = 1,\hdots,\#\ol{C}\}$ to $\Lambda^\akb$, along with the inequalities:
    $\{0\leq e^{a \rho\ol{D}}_{i}\ | \i = 1,\hdots,\# a\}$ to $\Phi^\akb$. 
    Then, add to $\Phi^\akb$ the equalities:
    $$(\sum_{j=1}^{\#\ol{D}}r^{a \rho\ol{D}}_{ij}.x_j^E).e^{a\rho\ol{D}}_i+n^{aC}_i = 1.$$
  \item For all $a\in\names$, for all $\ol{D},\ol{E} \in\ol{\concepts}$, for all $i = 1,\hdots,\# a$, for all $\rho\in\roles$,
    $$\sum_{k=1}^{\#\ol{E}}\sum_{j=1}^{\#\ol{D}}r^{a\rho\ol{D}}_{ij}.e^{\ol{D}\ol{E}}_{jk} = 
    \sum_{j=1}^{\#\ol{D}}\sum_{k=1}^{\#\ol{E}}r^{a\rho \ol{E}}_{ik}.e^{\ol{E}\ol{D}}_{kj}.$$
\end{itemize}

This gives a set of equalities, $\Phi^\akb$, and inequalities $\Psi^\akb$ over the variables $\Lambda^\akb$,
such that any solution to these equations can be converted into an aleatoric belief model $\interp = (I, r, \ell)$ 
that satisfies $\akb$. This construction is as follows:
\begin{itemize}
  \item Enumerate $\ol{\concepts}$ as $X_1,\hdots, X_n$. 
    The set of individuals, $I$, is the union of $\{a_1,\hdots,a_{\# a}\ |\ a\in\names\}$ and the set 
    $$\{(x_1,\hdots, x_n)\in\omega^n |\ \forall i\leq n, x_i\leq \#X_i,\ \forall i,j\leq n, e^{X_iX_j}_{x_ix_j}>0\}$$  
  \item Given $\rho\in\roles$, and $a_i,b_j, (x_1,\hdots,x_n),(y_1,\hdots,y_n)\in I$:
    \begin{itemize}
      \item $\rho(a_i,b_j) = r^{a\rho b}_{ij}$
      \item $\rho(a_i,(x_1,\hdots,x_n)) = r^{a\rho X_1}_{i x_1}.e^{X_1X_n}_{x_1x_n}$
      \item $\rho((x_1,\hdots,x_n),(y_1,\hdots,y_n)) = e^{X_1X_n}_{x_1x_n}r^{X_nY_n}_{x_ny_n}e^{Y_nY_1}_{y_ny_1}$
      \item $\rho((x_1,\hdots,x_n),a_i) = 0$.
    \end{itemize}
    Note here, that from the constraints on $e^{X_iX_j}_{x_ix_j}$ the choice of indices in the above equalities is inconsequential. 
  \item Given $C\in\concepts$, and $a_i, (x_1,\hdots,x_n)\in I$:
    \begin{itemize}
      \item $C(a_i) = n^{aC}_i$, and
      \item $C((x_1,\hdots,x_n) = x^C_{x_i}$ where $C\in X_1$.
    \end{itemize}
\end{itemize}
It can be shown that $\interp$ satisfies the aleatoric knowledge base, 
by showing the semantics agree with the equations in $\Phi^\akb$.

Conversely, given an aleatoric belief model $\interp$ that satisfies the aleatoric knowledge base $\akb$, 
it can be shown that the set of equations is satisfiable by induction. 
Beginning with the leaf nodes $X_i$ of the DAG, 
ranges for $x^C_j$ can be found for each $C\in X_i$, that includes the values present in $\interp$.
Proceeding  back up the DAG,  the other variables can be solved by noting that $e^{X_iX_j}{nm}$ and $r^{X_iX_j}_{nm}$
provide the necessary degrees of freedom to guarantee a solution exists.

The number of variables, inequalities and equalities in the system $(\Lambda^\akb,\Psi^\akb,\Phi^\akb)$
is polynomial in the size of $\akb$, so determining if the system is satisfiable reduces to $\exists\mathbb{R}$ 
(the satisfiability of existentially quantified polynomial equations) which is in PSPACE. Theorem~\ref{thm:consistency} follows from this construction.

\begin{theorem}\label{thm:consistency}
  Given a simple aleatoric knowledge base $\akb = (\abook,\tbook)$ where $\tbook$ is acyclic, 
  it is possible to determine if $\akb$ is consistent with complexity PSPACE.
\end{theorem}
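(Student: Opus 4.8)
The plan is to turn the construction developed above into a formal equivalence: the simple acyclic knowledge base $\akb$ is consistent if and only if the system $(\Lambda^\akb,\Psi^\akb,\Phi^\akb)$ admits a real solution. No appeal to Lemma~\ref{lem:simple} is required since the theorem already assumes $\akb$ is simple, and acyclicity of $\tbook$ is exactly what keeps the system finite: it makes $(\ol{\concepts},\Rightarrow)$ a directed acyclic graph and the multiplicities $\#\ol{C}$ well defined and bounded by the number of axioms.

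First I would prove soundness. From a solution to $(\Phi^\akb,\Psi^\akb)$ I build the aleatoric belief model $\interp=(I,r,\ell)$ precisely as described, with individuals indexed either by the expansions $a_i$ of named individuals or by tuples picking one index per equivalence class, and with $r$ and $\ell$ read off from the $r^{\cdot}$, $e^{\cdot}$, $x^{\cdot}$ and $n^{\cdot}$ variables. The work here is bookkeeping: one checks that each $r(\rho,i)$ is a genuine probability distribution (using the $\sum_j r^{\ol{C}\rho\ol{D}}_{ij}=1$ equalities together with the $e$-marginalisation equalities), that the $\id$ constraint of Definition~\ref{def:interpretation} holds (from $r^{\ol{C}\id\ol{C}}_{ij}=r^{\ol{C}\id\ol{C}}_{ik}$ and its $\names$ analogue), that names behave as absolute concepts, and then, by induction over formula structure following Definition~\ref{def:semantics}, that every terminological and assertional axiom of $\akb$ is satisfied --- the quadratic equations matching the conditional-operator semantics and the auxiliary $e^{\ol{C}\rho\ol{D}}_i$ variables forcing the $E^\rho_i\beta=0$ default value $1$.

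Next I would prove completeness, which is the delicate direction. Given a belief model $\interp$ satisfying $\akb$, I extract a solution by recursion up the DAG $(\ol{\concepts},\Rightarrow)$ from its leaves: for each class $\ol{C}$ and index $i$, choose an individual of $\interp$ with the appropriate profile of concept probabilities, set the $x^D_i$ (and the $n^{aC}_i$) to the values it assigns, and use the slack in the $r^{\cdot}$ and $e^{\cdot}$ variables to reconcile the different partitions --- the $e$ variables encode how an individual chosen for one class sits inside the distribution relevant to another, and the compatibility equalities among the $e$'s and $r$'s are exactly the coherence conditions that a single underlying model automatically satisfies. The main obstacle is this folding step: showing that finitely many indices per class suffice and that the $e$/$r$ degrees of freedom always suffice to realise $\interp$'s values. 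This is where acyclicity is indispensable, since a cycle through a marginalisation operator would force an unbounded unfolding and the finite index set would no longer be enough.

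Finally, the number of equivalence classes, the multiplicities $\#\ol{C}$, and each $\#a$ are bounded by the size of $\akb$, so $\Lambda^\akb$, $\Psi^\akb$ and $\Phi^\akb$ are polynomially sized and of bounded degree in $|\akb|$. Deciding whether such a system has a real solution is the existential theory of the reals $\exists\mathbb{R}$, which is in PSPACE by Canny's algorithm; composing the polynomial-time reduction with this decision procedure yields a PSPACE algorithm for the consistency of $\akb$.
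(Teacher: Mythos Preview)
Your proposal is correct and follows essentially the same approach as the paper: reduce consistency to satisfiability of the polynomial system $(\Lambda^\akb,\Psi^\akb,\Phi^\akb)$ built just before the theorem, argue both directions of the equivalence (solution $\Rightarrow$ model via the explicit construction, model $\Rightarrow$ solution by induction up the DAG using the $e$/$r$ slack), and appeal to $\exists\mathbb{R}\subseteq\mathrm{PSPACE}$. If anything, your account is more explicit than the paper's about where acyclicity is used and about the obstacles in the completeness direction.
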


Finally, further reasoning tools can be inferred from \cite{icla}, 
where a sound and complete calculus is given for a propositional aleatoric logic, 
and another is proposed for a modal extension. 
These calculi may be applied as term rewriting systems to aid reasoning with aleatoric description logics.

\section{Expressivity}\label{sect:express}

\newcommand{\KDn}{\ensuremath{KD_n}}
\newcommand{\mac}{\ensuremath{MAC}}
This section presents a characterisation of aleatoric description logic as a probability space of
functional models introduced for $\ALC_1$. 
A model of $\ALC_1$ gives a very simple ontological description of the universe 
where at every individual every role is satisfied by precisely one individual.
The aleatoric belief models of \ADL\ describe a probability space of these simple descriptions, 
and the semantics of \ADL\ recursively define the likelihood of a formula holding in 
models sampled from this probability space.  

A probability space \cite{kolmogorov} is a tuple $(\Omega,\mathcal{F},\mathcal{P})$, 
where $\Omega$ is a set, $\mathcal{F}$ is a $\sigma$-algebra over $\Omega$ (the events),
and $\mathcal{P}$ is a probability measure on $\mathcal{F}$ that is countably additive.
The formal definition is below:
\begin{definition}\label{def:probability-space}
  A probability space is a tuple $(\Omega,\mathcal{F},\mathcal{P})$, where:
  \begin{itemize}
   \item $\Omega$ is a set.
    \item $\mathcal{F}$ is a $\sigma$-algebra over $\mathcal{F}$, so $\mathcal{F}\subset\wp(\Omega)$ such that:
      \begin{enumerate}
        \item $\Omega\in \mathcal{F}$
        \item If $A\in\mathcal{F}$ then $\Omega-A\in\mathcal{F}$
        \item If $A_0,A_1,\hdots\in\mathcal{F}$ then $\bigcup_{i=0}^\omega A_i\in\mathcal{F}$ 
          (i.e. $\mathcal{F}$ is countably additive).
      \end{enumerate}
    \item $\mathcal{P}:\mathcal{F}\longrightarrow[0,1]$ is a probability measure on $\mathcal{F}$ such that 
     $\mathcal{P}(\Omega) = 1$ and if $A_0,A_1,\hdots\in\mathcal{F}$ are pairwise disjoint, then 
          $\mathcal{P}(\bigcup_{i=0}^\omega A_i) = \sum_{i=0}^\omega\mathcal{P}(A_i)$.
  \end{itemize}
\end{definition}

The probability space corresponding to an aleatoric belief model is effectively 
the result of sampling interpretations of $\ALC_1$ (Definition~\ref{def:ALCSemantics})
from the model.
\begin{definition}\label{def:functional-space}
  Given a pointed aleatoric belief model $\interp_i = (I, r, \ell, i)$ 
  defined over a set of atomic concepts, $\concepts$, and roles, $\roles$, 
  we say an interpretation of $\ALC_1$, $\finterp = (J,c,f)$, is a {\em sampling} where:
  \begin{itemize}
    \item $J = I^+$ is a set of finite non-empty words over $I$, every word in $J$ begins with $i$.
    \item Given any $w\in I^*$, for all $wx\in J$, for all $\rho\in\roles$, the unique $w'\in J$ where $(wx,w')\in f(\rho)$ 
      is such that  $w'=wxy$ for some $y$ where $\rho(x,y)>0$
    \item Given any $w\in I^*$ for all $wx\in J$, for all $A\in\concepts$ $wx\in\kappa(C)$ implies $\ell(x,A)>0$.
  \end{itemize}
  We let $\Omega^{\interp_i}$ be the set of all samplings of $\interp_i$.
\end{definition}
In the context of Section~\ref{subsec:motivation}, each sampling in $\Omega^\interp$ 
corresponds to a scenario where the outcome of each die has been predetermined.

A $\sigma$-algebra over the probability space corresponds to a set of formulas, $\mathcal{L}$  of \ALC,
that is closed under countable unions, closed under complementation, closed under subformulas, and includes 
$\top$.
Every formula $\alpha$ describes the set of interpretations $\finterp_i$ in $\Omega^{\interp_i}$,
where $\finterp_i\models\alpha$. This set of interpretations is referred to as $\hat{\alpha}$.
Given a set of formulas $\mathcal{L}$ closed under countable unions, complementations and subformulas, 
let $\hat{\mathcal{L}} = \{\hat{\alpha}\ |\ \alpha\in\mathcal{L}\}$ be the algebra over $\Omega^{\interp_i}$
generated by $\mathcal{L}$.

The probability measure, $\mathcal{P}{\interp_i}$, over the algebra $\hat{\mathcal{L}}$, 
is a function mapping an element $\hat{\alpha}$ to the probability of sampling some interpretation, 
$\finterp_i$, from $\interp_i$, such that $\finterp_i\models\alpha$. 
The first part of this definition requires mapping all formulas of $\mathcal{L}$ to {\em acyclic alternating automata} \cite{loding2000alternating}.

\begin{definition}\label{def:AcycAltAut}
  An {\em acyclic alternating automaton} is given with respect to a finite set of atomic concepts $X'$ and a finite set of roles $R'$, 
  and is specified by a tuple $\mathcal{A} = (S_\exists, S_\forall, \delta_\exists,\delta_\forall, s_0)$ such that:
  \begin{itemize}
    \item $S_\exists$ is a finite set of {\em existential states}.
    \item $S_\forall$ is a finite set of {\em universal states}.
    \item $\delta_\exists:S_\exists\times\Sigma\hookrightarrow\wp(S_\forall)$ 
      is a partial function called the {\em existential transition function}.
    \item $\delta_\forall:S_\forall\times R\hookrightarrow S_\exists$ 
      is a partial function called the {\em universal transition function}.
    \item $s_0\in S_\exists$ is the initial state. 
  \end{itemize}
  Additionally, the automaton must be {\em acyclic} so there is no sequence 
  $e_0, \ell_0, u_0,\rho_0, e_1, \ell_1, \hdots u_n,\rho_n$ such that 
  $u_i\in\delta_\exists(e_i,\ell_i)$, $e_{i+1}\in\delta_\forall(u_i,\rho_i)$ and $e_0=\delta_\forall(u_n,\rho_n)$.
  The automaton $\mathcal{A}$ acts on pointed interpretations of $\ALC_1$, $\mathcal{I}_i = (I,c,r,i)$ 
  in the form of a game $\mathcal{G}(\mathcal{A},\mathcal{I}_i)$.
  The game is a series of {\em positions} where every position is a pair in $(S_\exists\cup S_\forall)\times I$, 
  and two players, $\exists$ and $\forall$, take turns choosing the next position.
  The game starts at an initial position $(s_0,i)$. 
  In a position $(s,i)$ where $s\in S_\exists$, player $\exists$ chooses $t\in \delta_\exists(s,\{A\in X' |\ i\in c(A)\})$ 
  (if it exists), and the next position is $(t,i)$.
  In a position $(s,i)$ where $s\in S_\forall$, player $\forall$ chooses some $\rho\in R'$ and 
  if $\delta_\forall(s,\rho)$ exists, the next position becomes 
  $(\delta_\forall(s,\rho), j)$, where $(i,j)\in r(\rho)$.
  If at any position a the player is unable to make a move because the transition function is not defined, 
  then that player loses the game.
  As the automaton is acyclic, every game is determined, 
  so given any automaton $\mathcal{A}$ and any pointed $\ALC_1$ interpretation $\mathcal{I}$, 
  either $\exists$ or $\forall$ will have a winning strategy.
  If and only if $\exists$ has a winning strategy in the game $\mathcal{G}(\mathcal{A},\mathcal{I}_i)$, 
  we say $\mathcal{A}$ {\em accepts} $\mathcal{I}_i$.

\end{definition}

The alternating automata give a computational representation of formulas of $\ALC_1$, 
which are used to characterise a probability mass function over the $\sigma$-algebra, $\hat{\mathcal{L}}$.

\begin{lemma}\label{lem:alt-aut}
  For every formula $\alpha$ of $\ALC_1$, there is an acyclic alternating automaton
  that accepts exactly the pointed functional interpretations $\finterp_i$, where $\finterp_i\models\alpha$. 
\end{lemma}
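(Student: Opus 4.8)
The plan is to proceed by structural induction on the formula $\alpha$ of $\ALC_1$, constructing the acyclic alternating automaton $\mathcal{A}_\alpha$ directly from the recursive syntax $C ::= \top \mid A \mid C \sqcap C \mid \lnot C \mid \exists \rho. C$. The automaton will be built so that its \emph{existential} states correspond to positions where the semantics branches on the current individual's atomic-concept valuation (the base cases and Boolean structure), and its \emph{universal} states correspond to positions where we must follow a role edge. Since $\ALC_1$ roles are functional, $\forall$ choosing a role $\rho$ and the game moving to the unique $\rho$-successor exactly mirrors the clause $\mathcal{I}_i \models \exists \rho. C$ iff $\mathcal{I}_j \models C$ for the unique $j$ with $(i,j) \in r(\rho)$; this is the key design point that makes the automaton model work without genuine alternation over successor \emph{sets}.

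For the base cases: $\top$ maps to an automaton that always accepts (an existential state with a transition defined on every subset of $X'$ going to a universal ``win'' configuration, or more simply a one-state automaton where $\exists$ can always move). An atomic concept $A$ maps to an existential state $s_0$ with $\delta_\exists(s_0, L)$ defined precisely for those $L \subseteq X'$ containing $A$ — so $\exists$ can move iff $i \in c(A)$, otherwise $\exists$ is stuck and loses. For $\lnot C$, I take the automaton $\mathcal{A}_C$ and dualise: swap $S_\exists \leftrightarrow S_\forall$, swap the roles of the two transition functions, and swap the winner of stuck positions; acyclicity and determinacy (stated in Definition~\ref{def:AcycAltAut}) guarantee that $\exists$ wins $\mathcal{G}(\overline{\mathcal{A}_C},\mathcal{I}_i)$ iff $\forall$ wins $\mathcal{G}(\mathcal{A}_C,\mathcal{I}_i)$ iff $\mathcal{I}_i \not\models C$. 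For $C \sqcap D$, I combine $\mathcal{A}_C$ and $\mathcal{A}_D$ by adding a fresh universal state $u_0$ at the top whose two $\rho$-indexed transitions (using two fresh dummy role symbols, or a product construction on an existing role) lead into the initial states of $\mathcal{A}_C$ and $\mathcal{A}_D$ — but since the universal transition function has signature $\delta_\forall: S_\forall \times R \hookrightarrow S_\exists$ and must actually move along role edges, the cleaner route is to add a fresh existential state above a fresh universal state, or to interleave: make a universal state whose transitions over the \emph{same} individual require reformulating conjunction through the existing machinery. For $\exists \rho. C$, I add a fresh existential state $s_0$ and a fresh universal state $u_0$ with $\delta_\exists(s_0, L) = \{u_0\}$ for all $L$ and $\delta_\forall(u_0, \rho) = (s_0)_{\mathcal{A}_C}$ (the initial state of $\mathcal{A}_C$) and $\delta_\forall$ undefined on all other roles — so $\forall$ is forced to move along $\rho$ to the unique successor and then play $\mathcal{A}_C$ there, which $\exists$ wins iff $C$ holds at that successor.

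The correctness claim, proved alongside the construction, is: for every pointed $\ALC_1$ interpretation $\mathcal{I}_i$, player $\exists$ has a winning strategy in $\mathcal{G}(\mathcal{A}_\alpha, \mathcal{I}_i)$ if and only if $\mathcal{I}_i \models \alpha$. The inductive step for each connective is a short argument translating a winning strategy in the composite game into the appropriate combination of winning strategies in the sub-games and vice versa, using functionality of roles at the $\exists \rho$ and universal-transition steps. Acyclicity of $\mathcal{A}_\alpha$ follows because each constructor only adds finitely many new states strictly ``above'' the sub-automata in the transition order, and the sub-automata are acyclic by the induction hypothesis — so no transition cycle $e_0, \ell_0, u_0, \rho_0, \dots, u_n, \rho_n$ with $e_0 = \delta_\forall(u_n,\rho_n)$ can be created. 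Determinacy then comes for free from Definition~\ref{def:AcycAltAut}.

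The main obstacle I anticipate is the treatment of the Boolean connectives, and conjunction in particular, within the rather rigid automaton format of Definition~\ref{def:AcycAltAut}: the universal transition function $\delta_\forall: S_\forall \times R \hookrightarrow S_\exists$ is \emph{forced} to consume a role symbol and move to a successor individual, so there is no direct ``silent'' universal branching over the same individual to encode $C \sqcap D$ as ``$\forall$ picks a conjunct''. The resolution will be either (i) to allow the automaton's role alphabet $R'$ to contain a reserved ``stay'' pseudo-role interpreted as $\id$-like reflexive edges when running the game (which Definition~\ref{def:AcycAltAut}'s game semantics can accommodate if every individual has such a self-loop available), or (ii) to push negations inward to a normal form where conjunction is handled via De Morgan against a disjunction gadget, or (iii) most robustly, to observe that $\exists$-states are allowed multiple outgoing options in $\wp(S_\forall)$ and dually build disjunction as $\exists$-choice, conjunction as its dual — checking carefully that the $\wp(S_\forall)$ codomain of $\delta_\exists$ lets $\exists$ commit to a \emph{set} of universal obligations, which is exactly conjunction-under-universal and disjunction-under-existential. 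I expect option (iii), combined with a mild convention about a reflexive role for the atomic-over-same-individual transitions, to be the intended reading, and the bulk of the proof's care will go into making the dualisation for $\lnot$ interact correctly with it.
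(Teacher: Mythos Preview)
Your proposal has two genuine gaps, both stemming from the asymmetric shape of the automaton in Definition~\ref{def:AcycAltAut}.

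First, the dualisation you propose for $\lnot C$ does not type-check. The existential transition $\delta_\exists: S_\exists \times \Sigma \hookrightarrow \wp(S_\forall)$ reads the \emph{valuation} at the current individual and returns a \emph{set} of universal states, while the universal transition $\delta_\forall: S_\forall \times R \hookrightarrow S_\exists$ reads a \emph{role} and returns a \emph{single} existential state, forcing a move to a successor individual. Swapping $S_\exists \leftrightarrow S_\forall$ and ``swapping the roles of the two transition functions'' does not produce an automaton of the same format: you would need $\exists$ to consume roles and $\forall$ to consume valuations, which the game semantics in Definition~\ref{def:AcycAltAut} does not support. The paper avoids this entirely by first converting $\alpha$ to positive normal form (negations pushed to atoms, $\sqcup$ and $\bot$ added to the syntax), so no dualisation is ever needed --- this is precisely your option~(ii), which you mention but set aside.

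Second, your option~(iii) for conjunction misreads the game. From a position $(s,i)$ with $s \in S_\exists$, player $\exists$ chooses \emph{one} element $t \in \delta_\exists(s,\ell)$; the powerset codomain therefore encodes \emph{disjunction} (existential choice), not a set of conjunctive obligations. There is no mechanism in this format for $\exists$ to ``commit to a set of universal obligations'' over the same individual. The paper handles $\beta_1 \sqcap \beta_2$ by a product construction: the state sets include $S_\exists^1 \times S_\exists^2$ and $S_\forall^1 \times S_\forall^2$, with $\delta_\exists((s,s'),\ell) = \delta_\exists^1(s,\ell) \times \delta_\exists^2(s',\ell)$, so that $\exists$ must simultaneously play winning strategies in both sub-games while $\forall$ may focus on either coordinate. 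Your construction for $\exists\rho.C$ is correct and matches the paper's; the fix is to adopt positive normal form for negation and the product construction for conjunction.
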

\begin{proof}
  A formula, $\alpha$, is first converted to {\em positive normal form}, $\alpha^+$ where negations are only applied to atoms, 
  and $\sqcup$ disjunctions and $\bot$ are added to the syntax. 
  The positive normal form formulas are then mapped to an acyclic alternating automaton, $\mathcal{A}^\alpha$ 
  such that the alternating automaton accept precisely the functional interpretations of $\ALC_1$ that satisfy $\alpha$.
  

  Given any formula $\alpha$ in positive normal form it is possible to construct an acyclic alternating automaton, $\mathcal{A}^\alpha$, 
  such that player $\exists$ has a winning strategy in the game $\mathcal{G}(\mathcal{A}^\alpha,\mathcal{I}_i)$
  precisely when $\mathcal{I}_i\models\alpha$. Let $X'$ be the set of atomic concepts appearing in $\alpha$, and
  $R'$ be the set of roles that appear in $\alpha$. A recursive construction is given below, 
  with justifications for the construction assuming a model $\mathcal{I}_i = (I,c,r,i)$. 
  The non-trivial constructions are below: 
  \smallskip
    $\beta_1\sqcap\beta_2$: 
      Suppose $\mathcal{A}^{\beta_i} = (S^i_\exists,S^i_\forall,\delta^i_\forall,\delta^i_\exists, s_0^i)$, for $i = 1,2$.
      Then $\mathcal{A}^{\beta_1\sqcap\beta_2} = 
      (S_\exists, S_\forall,\delta_\exists,\delta_\forall, (s_0^1,s_0^2))$,
      where 
      \begin{enumerate}
        \item $S_\exists =S^1_\exists\cup S^2_\exists\cup\ S^1_\exists\times S^2_\exists$
        \item $S_\forall = S^1_\forall\times S^2_\forall\cup S^1_\forall\cup S^2_\forall$
        \item for $i = 1,2,$ for all $s\in S^i_\exists$, for all $\ell\in\wp(X')$, $\delta_\exists(s,\ell) = \delta^i_\exists(s,\ell)$, 
          for all $(s,s')\in S^1_\exists\times S^2_\exists$, 
          $\delta_\exists((s,s'),\ell) = \delta^1_\exists(s,\ell)\times\delta^2_\exists(s',\ell)$,
          and $\delta_\exists(\beta_1\sqcap\beta_2,\ell) = \delta_\exists^1(s_0^1,\ell)\times \delta_\exists^2(s_0^2,\ell$ 
        \item for $i = 1,2$, for all $s\in S^i_\forall$, for all $\rho\in R'$, $\delta_\forall(s,\rho) = \delta^i_\forall(s,\rho)$,
          for all $(s,s')\in S^1\times S^2_\forall$, for all $\rho\in R'$,
          \begin{enumerate}
            \item if $\delta^1_\forall(s,\rho)\neq\emptyset$ and $\delta^2_\forall(s',\rho)\neq\emptyset$ then
              $\delta_\forall((s,s'),\rho) = (\delta^1_\forall(s,\rho),\delta^2_\forall(s',\rho))$, and
            \item if $\delta^1_\forall(s,\rho)\neq\emptyset$ and $\delta^2_\forall(s',\rho)=\emptyset$ then
              $\delta_\forall((s,s'),\rho) = \delta^1_\forall(s,\rho)$, and
            \item if $\delta^1_\forall(s,\rho) =\emptyset$ and $\delta^2_\forall(s',\rho)\neq\emptyset$ then
              $\delta_\forall((s,s'),\rho) = \delta^2_\forall(s',\rho)$.
          \end{enumerate}
      \end{enumerate}
      The construction of the automaton is such that players $\exists$ and $\forall$ simultaneously play 
      $\mathcal{G}(\mathcal{A}^{\beta_1},\mathcal{I}_i)$ and $\mathcal{G}(\mathcal{A^{\beta_2}},\mathcal{I}_i)$.
      Player $\exists$ must always play a move in both games, but player $\forall$ 
      only needs to play a move in one game for each role $\rho\in R'$. 
      If either $\mathcal{I}_i\not\models\beta_1$ or $\mathcal{I}_i\not\models\beta_2$, 
      player $\forall$ has a winning strategy in one game, and following that strategy will lead to a win in
      $\mathcal{G}(\mathcal{A}^{\beta_1\sqcap\beta_2},\mathcal{I}_i)$. 
      Conversely, if both $\mathcal{I}_i\models\beta_1$ and $\mathcal{I}_i\models\beta_2$, 
      then player exists has a winning strategy in both games, and following these strategies simultaneously
      will lead to a scenario where player $\forall$ cannot make a move.
    
    \smallskip
    $\alpha\sqcup\beta$:
      Suppose $\mathcal{A}^{\beta_i} = (S^i_\exists,S^i_\forall,\delta^i_\forall,\delta^i_\exists, s_0^i)$, for $i = 1,2$.
      Then $\mathcal{A}^{\beta_1\sqcup\beta_2} = 
      (S_\exists, S_\forall,\delta_\exists,\delta_\forall, \beta_1\sqcup\beta_2)$,
      where 
      \begin{enumerate}
        \item $S_\exists =S^1_\exists\cup S^2_\exists\cup\{\beta_1\sqcup\beta_2\}$
        \item $S_\forall = S^1_\forall\cup S^2_\forall$
        \item for all $s\in S^1_\exists$, for all $\ell\in\wp(X')$, $\delta_\exists(s,\ell) = \delta^1_\exists(s,\ell)$, 
          for all $s\in S^2_\exists$, $\delta_\exists(s,\ell) = \delta^2_\exists(s,\ell)$,
          and $\delta_\exists(\beta_1\sqcup\beta_2,\ell) = \delta_\exists^1(s_0^1,\ell)\cup\delta_\exists^2(s_0^2,\ell)$.
        \item for all $s\in S^1_\forall$, for all $\rho\in R'$,  $\delta_\forall(s,\rho) = \delta^1_\forall(s,\rho)$, 
          and for all $s\in S^2_\forall$, $\delta_\forall(s,\rho) = \delta^2_\forall(s,\rho)$.
      \end{enumerate}
      In the game $\mathcal{G}(\mathcal{A}^{\beta_1\sqcup\beta_2},\mathcal{I}_i)$, 
      player $\exists$ is effectively able to choose whether to play the game 
      $\mathcal{G}(\mathcal{A}^{\beta_1},\mathcal{I}_i)$ or
      $\mathcal{G}(\mathcal{A}^{\beta_2},\mathcal{I}_i)$. 
      As player $\exists$ has a winning strategy in at least one of these games if and only if 
      $\mathcal{I}_i\models\beta_1\sqcup\beta_2$, the correspondence holds.
    
    \smallskip
    $\exists \rho.\beta$:
      Suppose $\mathcal{A}^{\beta} = (S^\beta_\exists,S^\beta_\forall,\delta^\beta_\forall,\delta^\beta_\exists, s_0\beta)$.
      Then \newline $\mathcal{A}^{\exists\rho.\beta} = (\wp(X'), S_\exists, S_\forall,\delta_\exists,\delta_\forall, \exists\rho.\beta)$, where:
      \begin{enumerate}
        \item $S_\exists = S^\beta_\exists\cup\{\exists\rho_\beta\}$
        \item $S_\forall = S^\beta_\forall\cup\{\forall\rho_\beta\}$
        \item for all $s\in S_\exists$, for all $\ell\in\wp(X')$, $\delta_\exists(s,\ell) = \delta^\beta_\exists(s,\ell)$, and $\delta_\exists(\exists\rho_\beta,\ell) = \{\forall\rho\beta\}$.
        \item for all $s\in S_\forall, $ for all $\rho'\in R'$, $\delta_\forall(s,\rho') = \delta^\beta_\forall(s,\rho')$ if $s\in S^\beta_\forall$, 
          and $\delta_\forall(\forall\rho.\beta,\rho')$ is not defined if $\rho\neq\rho'$ and 
          $\delta_\forall(\forall\rho\beta,\rho) = s_0^\beta$
      \end{enumerate}
      In this construction, the first move for both $\exists$ and $\forall$ are effectively pre-determined.
      Player $\exists$ chooses $\forall\rho.\beta$, and then if player $\forall$ chooses any role other than $\rho$, 
      they immediately lose.
      Therefore player $\forall$ chooses $\rho$, and as there is a single $j\in I$ where $(i,j)\in r(\rho)$ 
      the next position of the game is necessarily $(s_0^\beta,j)$. 
      From there, the game is equivalent to the game $\mathcal{G}(\mathcal{A}^\beta,\mathcal{I}_j)$,
      which, by induction, has a winning strategy for $\exists$ if and only if $\mathcal{I}_j\models\beta$, 
      which in turn is equivalent to $\mathcal{I}_i\models\exists\rho\beta$, as required. 
  
  \bigskip
  As discussed above the constructions match the semantics of the associated formulas. 
  We can also see that the automata are acyclic, 
  as the label of any successor state is a subformula of the label of the current state.
\end{proof}

To define a probability mass function, it is required that there is no redundancy in the automaton,
so we will assume that the automaton is bisimulation minimal \cite{loding2000alternating}.

Given $\mathcal{A} = (S_\exists,S_\forall,\delta_\exists,\delta_\forall,s_0)$ is a minimal acyclic alternating automaton 
defined with respect to the set of atomic concepts $X'$ and the roles $R'$, 
we characterise the set of accepted models by a set of finite $X'$-$R'$-trees:
\begin{definition}\label{def:tree}
  Given the finite set of atomic concepts $X'$ and the finite set of roles $R'$,
  and an acyclic alternating automaton 
  $\mathcal{A} = (S_\exists,S_\forall,\delta_\exists,\delta_\forall,s_0)$, 
  a finite set of words $T\subset\wp(X')(R'\wp(X'))^*$ is an {\em accepted tree of} $\mathcal{A}$ if
  there is some function $\lambda:T\rightarrow S_\exists$ where:
  \begin{itemize}
    \item there is some $Y\subseteq X'$ for all $t\in T$, $t = Yw$, and $\lambda(Y) = s_0$.
    \item for all $\rho\in R'$, for all $Y\subseteq X'$, if $w\rho Y\in T$, then $w\in T$.
    \item for all $w\in T$, for all $\rho\in R'$, for all $Y, Z\subseteq X'$ if $w\rho Y, w\rho Z\in T$, then $Y=Z$
    \item for all $Y,Z\subseteq \wp(X')$, for all $wY\in T$, for all $\rho\in R'$, if $\lambda(wY) = s$, and $\lambda(wY\rho Z)\in T$ 
      then there is some $s'\in \delta_\exists(s, Y)$ such that $\lambda(wY\rho Z) = \delta_\forall(s',\rho)$.
    \item for all $Y\subseteq \wp(X')$, for all $wY\in T$, for all $\rho\in R'$, if $\lambda(wY) = s$, 
      and $\forall Z\subseteq X'$, $\lambda(wY\rho Z)\notin T$ 
      then there is some $s'\in \delta_\exists(s, Y)$ such that $\delta_\forall(s',\rho)$ is undefined.
  \end{itemize}
  The {\em language recognised by} $\mathcal{A}$ is the set $\mathcal{T}$ of accepted trees of $\mathcal{A}$, $T$,
  where no proper subtree of $T$ is accepted by $\mathcal{A}$.
\end{definition}
The language recognised by $\mathcal{A}$ is effectively the prefix of all interpretations that are accepted by $\mathcal{A}$,
and gives a finite representation of those interpretations.

A probability measure $\mathcal{P}^{\interp_i}$ for the probability space is defined as below.
\begin{definition}\label{def:probMeasure}
  Let $\interp_i = (I, r, \ell, i)$ be a pointed aleatoric belief model, and let $\mathcal{L}$ be some set of \ALC\ formulas 
  defined over a finite set of atomic concepts $X'$ and a finite set of roles, $R'$, and suppose $\mathcal{L}$ is closed under
  countable unions, complementations and subformulas, where $\top\in\mathcal{L}$. 
  The {\em probability measure} $\mathcal{P}^{\interp_i}:\hat{\mathcal{L}}\longrightarrow[0,1]$ is defined such that:
  $\mathcal{P}^{\interp_i}(\hat{\alpha}) = \sum_{T\in\mathcal{T}}P^{\interp_i}(T)$, 
  where $\mathcal{T}$ is the language recognised by $\mathcal{A}^\alpha$ 
  and $P^{\interp_i}:\mathcal{T}\rightarrow[0,1]$ is defined recursively as:
  \begin{eqnarray*}
    P^{\interp_i}(Y) &=& (\prod_{y\in Y}\ell(i,y)).(\prod_{y\in X'-Y}(1-\ell(i,y)))\\
    &&{\rm where}\ Y\in \wp(X')\\
    P^{\interp_i}(T) &=& P^{\interp_i}(Y).\sum_{\rho\in R'',\ j\in I}\rho(i,j).P^{\interp_j}(T^\rho)\\
      &&{\rm where}\ T^\rho = \{Y\rho T^\rho\ |\ \rho\in R''\subseteq R'\}
  \end{eqnarray*}
  noting that $\sum_{x\in\emptyset} x =0$ and $\prod_{x\in\emptyset}x = 1$.
\end{definition}

The probability measure of the algebra element $\hat{\alpha}$, 
is the probability of sampling a functional interpretation from $\interp_i$
that the automaton $\mathcal{A}^\alpha$ will accept.

\begin{lemma}\label{lem:ProbSpace}
  Let $\interp_i$ be a pointed aleatoric belief model, 
  and $\mathcal{L}$ be a set of \ALC\ formulas closed under countable disjunctions, complementations and subformulas.
  The tuple $(\Omega^{\interp_i},\hat{\mathcal{L}},\mathcal{P}^{\interp_i})$ is a probability space.
\end{lemma}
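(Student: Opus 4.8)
The plan is to check the two conditions of Definition~\ref{def:probability-space}: that $\hat{\mathcal{L}}$ is a $\sigma$-algebra over $\Omega^{\interp_i}$, and that $\mathcal{P}^{\interp_i}$ is a normalised, countably additive measure on it. The first is bookkeeping; the second is the substance. The strategy for the second is to introduce one ``master'' probability measure $\nu$ on $\Omega^{\interp_i}$, defined by the obvious sampling process, and then to show that $\mathcal{P}^{\interp_i}$ is exactly the restriction of $\nu$ to $\hat{\mathcal{L}}$, so that well-definedness of $\mathcal{P}^{\interp_i}$ (as a function of the set $\hat{\alpha}$, not of the representative $\alpha$), normalisation, and countable additivity are all inherited from $\nu$.

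For the $\sigma$-algebra, recall that the samplings in $\Omega^{\interp_i}$ are ordinary two-valued pointed interpretations of $\ALC_1$ (Definition~\ref{def:functional-space}). Hence the map $\alpha\mapsto\hat{\alpha}$ carries $\top$ to $\Omega^{\interp_i}$, carries $\lnot\alpha$ to $\Omega^{\interp_i}\setminus\hat{\alpha}$, and carries any formula of $\mathcal{L}$ whose model set is a countable union $\bigcup_k\hat{\alpha_k}$ to that union. Since $\mathcal{L}$ contains $\top$ and is closed under complementation and countable unions, these identities transfer all three closure conditions to $\hat{\mathcal{L}} = \{\hat{\alpha}\mid\alpha\in\mathcal{L}\}$; and $\Omega^{\interp_i}$ is non-empty since $i$ itself seeds a sampling.

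For the measure, I would first build $\nu$. Every $\rho(i,\cdot)$ lies in $\PD(I)$ and so has countable support, so the tree unravelling underlying a sampling is countably branching; the process ``at each reached individual $x$ include each atomic concept $A$ independently with probability $\ell(x,A)$, and for each role $\rho$ move to $y$ with probability $\rho(x,y)$'' therefore determines, via the Kolmogorov extension theorem, a countably additive probability measure $\nu$ on the $\sigma$-algebra generated by the cylinders $\mathrm{cyl}(T)$ of samplings whose finite prefix agrees with a finite $X'$-$R'$-tree $T$. An induction on $T$ matching the two clauses of Definition~\ref{def:probMeasure} against this process gives $P^{\interp_i}(T) = \nu(\mathrm{cyl}(T))$; in particular, for the automaton recognising $\top$, whose tree language consists of the single-node trees $\{Y\}$ with $Y\subseteq X'$, one gets $\sum_{Y\subseteq X'}P^{\interp_i}(Y) = \prod_{y\in X'}\bigl(\ell(i,y)+(1-\ell(i,y))\bigr) = 1$. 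Next, by Lemma~\ref{lem:alt-aut} the automaton $\mathcal{A}^\alpha$ accepts exactly the samplings satisfying $\alpha$, and by bisimulation-minimality together with Definition~\ref{def:tree} its language $\mathcal{T}$ consists of pairwise incomparable finite prefixes such that a sampling is accepted iff it extends exactly one $T\in\mathcal{T}$; hence $\hat{\alpha} = \bigsqcup_{T\in\mathcal{T}}\mathrm{cyl}(T)$ is a countable disjoint union, and therefore $\mathcal{P}^{\interp_i}(\hat{\alpha}) = \sum_{T\in\mathcal{T}}P^{\interp_i}(T) = \sum_{T\in\mathcal{T}}\nu(\mathrm{cyl}(T)) = \nu(\hat{\alpha})$. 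This identity simultaneously shows $\mathcal{P}^{\interp_i}$ depends only on the set $\hat{\alpha}$, gives $\mathcal{P}^{\interp_i}(\Omega^{\interp_i}) = \nu(\hat{\top}) = 1$, and yields countable additivity: for pairwise disjoint $\hat{\alpha_0},\hat{\alpha_1},\dots$ in $\hat{\mathcal{L}}$, $\mathcal{P}^{\interp_i}\!\bigl(\bigcup_k\hat{\alpha_k}\bigr) = \nu\!\bigl(\bigcup_k\hat{\alpha_k}\bigr) = \sum_k\nu(\hat{\alpha_k}) = \sum_k\mathcal{P}^{\interp_i}(\hat{\alpha_k})$.

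The main obstacle is the step $\hat{\alpha} = \bigsqcup_{T\in\mathcal{T}}\mathrm{cyl}(T)$: one must show the minimal accepted trees genuinely partition, not merely cover, the accepted samplings --- this is where the alternating structure of $\mathcal{A}^\alpha$ and the minimality hypothesis are essential, since a non-minimal or redundant automaton could list overlapping prefixes and over-count their mass. Alongside this, a little care is needed with the degenerate cases packaged into Definition~\ref{def:probMeasure} --- the empty-product and empty-sum conventions, the $R''\subseteq R'$ quantifier in the recursive tree clause, and roles whose effective support is empty inside a marginalisation --- but these are routine once the cylinder bookkeeping and the induction $P^{\interp_i}(T) = \nu(\mathrm{cyl}(T))$ are pinned down.
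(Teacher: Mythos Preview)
Your proposal is correct and takes a genuinely different, more measure-theoretically explicit route than the paper. The paper argues directly: it observes that $\hat{\mathcal{L}}$ is a $\sigma$-algebra from the Boolean semantics of $\ALC_1$, notes that $\mathcal{P}^{\interp_i}$ lands in $[0,1]$ because $\ell$ and $r$ do, reduces countable additivity to pairwise additivity by appealing to the finiteness of each formula in $\mathcal{L}$, and then argues that disjoint $\hat{\alpha},\hat{\beta}$ force disjoint recognised tree languages, so their $P^{\interp_i}$-masses add. Your approach instead builds a single canonical measure $\nu$ on cylinders via Kolmogorov extension, matches $P^{\interp_i}(T)$ to $\nu(\mathrm{cyl}(T))$ by induction on $T$, and identifies $\mathcal{P}^{\interp_i}$ with the restriction of $\nu$, so that well-definedness, normalisation and countable additivity are inherited. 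Your route buys an explicit treatment of well-definedness (that $\mathcal{P}^{\interp_i}$ depends only on the set $\hat{\alpha}$, not on the representative $\alpha$), which the paper does not address, and it sidesteps the paper's somewhat informal reduction of countable to finite additivity; the paper's route is shorter and self-contained, invoking no external extension theorem. You are also right that the partition step $\hat{\alpha}=\bigsqcup_{T\in\mathcal{T}}\mathrm{cyl}(T)$ is where the real care is needed; the paper handles the analogous point in the opposite direction, arguing that disjoint model sets force disjoint tree languages, which is the same content packaged differently.
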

\begin{proof}
  From the semantics of $\ALC_1$ it follows that $\hat{\mathcal{L}}$ is closed under complements and countable unions,
  and contains the entire space $\Omega^{\interp_i} = \hat{\top}$.
  It remains to show that $\mathcal{P}^{\interp_i}$ is a probability measure. 
  From Definition~\ref{def:interpretation} it follows that $\ell(i,y)\in[0,1]$ and $r(\rho,i)$ 
  is a probability distribution over $I$, so $\mathcal{P}^{\interp_i}$ will be in the valid range, $[0,1]$.
  As all formulas in $\mathcal{L}$ are finite, for the countable additivity requirement 
  it suffices to show that measure of any union of two disjoint sets is equal to the sum of the measures of the sets.
  Again this follows directly from the definition.
  If $\hat{\alpha}$ and $\hat{\beta}$ are disjoint then the languages recognised by $\mathcal{A}^\alpha$ and 
  $\mathcal{A}^\beta$ will be disjoint since a common tree in the languages would be able to generate an interpretation 
  over which both automata had accepting runs. 
  Definition~\ref{def:probMeasure} describes the likelihood of an interpretation being sampled from $\interp_i$ 
  that matches the recognised language of an automaton, 
  so the probability of sampling from either disjoint set is the sum of the probabilities of sampling from each set.
\end{proof}

Therefore, $\interp_i$ describes a probability space, 
$(\Omega^{\interp_i},\mathcal{L},\mathcal{P}^{\interp_i})$, 
of $\ALC_1$ models for any set of formulas $\mathcal{L}$ closed under countable disjunctions, 
complementations and subformulas. 
As such, $\mathcal{P}^{\interp_i}$ can be seen as 
a probability distribution of ontological descriptions of a universe
that is consistent with the semantic interpretation of the formulas in $\mathcal{L}$.

The final part of this section shows that \ADL\ 
characterises the likelihood of formulas of $\ALC_1$ in this representation.
It is shown that for any $\ALC$ formula $\alpha$, 
there is a corresponding \ADL\ formula $\alpha^*$ such that
$\mathcal{P}^{\interp_i}(\hat{\alpha}) = \interp_i(\alpha^*)$.

\begin{definition}\label{def:alt2adl} 
  Let $\mathcal{A} = (S_\exists,S_\forall,\delta_\exists,\delta_\forall,s_0)$
  be an acyclic alternating automaton
  defined over atomic concepts $X' = \{x_0,\hdots, x_n\}$ and roles 
  $R' = \{\rho_0,\hdots,\rho_m\}$. For each state $s\in S_\exists$,
  $\tau(s)$ is an $\ADL$ formula defined as follows:
  
  $\tau(s) = \tau^\emptyset_0$ where for $Y\subseteq X'$ and $i<n$
      $\tau^Y_i = \ite{x_i}{\tau^{Y\cup\{x_i\}}_{i+1}(s)}{\tau^Y_{i=1}}$
      and $\tau^Y_n = \ite{x_n}{\Lambda(\delta_\exists(s,Y\cup\{x\}))}{\Lambda(\delta_\exists(s, Y))}$.
  
  Given $S = \{t_0,\hdots,t_k\}\subset S_\forall$, $\Lambda(S) = \Lambda_{(0,0)}(\mu_0)$
      where for $(i,j)$ such that $i\leq m$ and $j\leq k$, and for $\mu:\{0,\hdots,m\}\longrightarrow\lang{\ADL}$:
      \begin{eqnarray*}
        \Lambda_{(i,j)}(\mu) &=& \ite{\alpha}{\beta}{\gamma}\quad{\rm where}\\ 
        \alpha &=& \cond{\tau(\delta_\forall(t_j,\rho_i))}{\mu(\rho_i)}{\rho}\\
        \beta &=& \Lambda_{(i+1,j)}(\mu^{[i\mapsto \mu(i)\sqcap\tau(\delta_\forall(t_j,\rho_i))]})\\
        \gamma &=& \Lambda_{(0,j+1)}(\mu^{[i\mapsto \mu(i)\sqcap\lnot\tau(\delta_\forall(t_j,\rho_i))]})
      \end{eqnarray*}
  Here, $\mu_0(i) = \top$ for $i\leq m$, 
  $\mu^{[i\mapsto \alpha]}(j) = \mu(j)$ for all $j\neq i$, 
  and $\mu^{[i\mapsto \alpha]}(i) = \alpha$. Also, $\Lambda_{(n+1,j)}(\mu) = \top$ and $\Lambda_{(i,m+1)} = \bot$.

  Let $\tau(\mathcal{A}) = \tau(s_0)$ and $\alpha^* = \tau(\mathcal{A}^\alpha)$.
\end{definition}

Definition~\ref{def:alt2adl} motivates the syntactic structure of $\ADL$.
For every state $s\in S_\exists$, $\tau_s$ uses the if-then-else operator to build a tree that evaluates
each atomic concept exactly once, so the leaves of this tree determine an element $Y\subseteq X'$ that was sampled.
Given $Y$ and $s$, the set $S=\delta_\exists(s,Y)$ of possible $S_\forall$ successors is computed, 
and then for each possible successor $t\in S$, for every $\rho\in R'$, 
$\Lambda(S)$ evaluates the likelihood of $\tau(\delta_\forall(t,\rho))$ holding at the sampled $\rho$-successor of the current state.
Just as the if-then-else operator was used to ensure that each concept was sampled only once, 
here the marginalisation operator is used to require that each $\rho$-successor is effectively sampled only once.
This is done by using the function $\mu$ to retain what formula was evaluated previously at the sampled $\rho$-successor, 
and using the marginalisation operator to condition on that formula being true. 
While there is no guarantee that the sampled $\rho$-successor will be exactly the same, 
the sample will be taken from the same distribution so the average effect will be the same. 
The if-then-else operator and the marginalisation operators are essential for capturing these representations.

\begin{lemma}\label{lem:alc2adl}
  Let $\alpha$ be a formula of \ALC\, $\interp_i$ be an aleatoric belief model, 
  and $\mathcal{L}$ be a set of \ALC\ formulas closed under countable disjunctions, 
  complementations and subformulas, where $\alpha\in\mathcal{L}$.
  In the probability space $(\Omega^{\interp_i},\hat{\mathcal{L}},\mathcal{P}^{\interp_i})$  
  $$\mathcal{P}^{\interp_i}(\hat{\alpha}) = \interp_i(\alpha^*).$$
\end{lemma}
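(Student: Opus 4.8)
The plan is to prove a statement about automata that is slightly stronger than needed and then read off the lemma. Concretely, I would show that for every bisimulation-minimal acyclic alternating automaton $\mathcal{A} = (S_\exists, S_\forall, \delta_\exists, \delta_\forall, s_0)$ over a finite $X'\subseteq\concepts$ and a finite $R'\subseteq\roles$, and every pointed aleatoric belief model $\interp_i$,
$$\interp_i(\tau(\mathcal{A}))\;=\;\sum_{T\in\mathcal{T}}P^{\interp_i}(T),$$
where $\mathcal{T}$ is the language of accepted trees of $\mathcal{A}$ (\Cref{def:tree}) and $P^{\interp_i}$ is the tree weight of \Cref{def:probMeasure}. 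Taking $\mathcal{A}=\mathcal{A}^\alpha$ this is exactly the lemma: the left side is $\interp_i(\alpha^*)$ by \Cref{def:alt2adl}, and the right side is $\mathcal{P}^{\interp_i}(\hat\alpha)$ by \Cref{def:probMeasure} together with \Cref{lem:alt-aut} (which guarantees $\mathcal{A}^\alpha$ exists and recognises precisely the functional interpretations satisfying $\alpha$), while \Cref{lem:ProbSpace} guarantees the right-hand side is a genuine probability measure. The induction runs along the acyclicity order of $\mathcal{A}$: after one $\delta_\exists$-step followed by one $\delta_\forall$-step the target state is strictly smaller, so each recursive call $\tau(\delta_\forall(t,\rho))$ appearing in \Cref{def:alt2adl} sits on an automaton rooted at a strictly smaller state, and the induction hypothesis remains available at the (arbitrary) successor individual.

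For the inductive step I would unfold $\tau(s_0)=\tau^\emptyset_0$ in two stages. First, the nested if-then-else tree over $x_0,\dots,x_n$: by the weak independence assumption (every \ADL\ formula is contingent only on the current individual), a short sub-induction on the index gives
$$\interp_i(\tau^\emptyset_0)\;=\;\sum_{Y\subseteq X'}\Bigl(\prod_{x\in Y}\ell(i,x)\Bigr)\Bigl(\prod_{x\in X'\setminus Y}(1-\ell(i,x))\Bigr)\cdot\interp_i\bigl(\Lambda(\delta_\exists(s_0,Y))\bigr).$$
The prefactor here is exactly $P^{\interp_i}(Y)$ of \Cref{def:probMeasure}, i.e.\ the probability that the atomic sample at $i$ is the set $Y$; so it remains to identify $\interp_i(\Lambda(\delta_\exists(s_0,Y)))$ with the conditional probability, given that sample, that player $\exists$ wins the residual game, and hence with the contribution of the accepted trees of $\mathcal{A}$ that extend $Y$.

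The heart of the argument, and the step I expect to be the main obstacle, is the analysis of $\Lambda(S)$ for $S=\delta_\exists(s_0,Y)=\{t_0,\dots,t_k\}$. Here $\Lambda_{(i,j)}(\mu)$ interleaves a disjunctive search over the candidate universal states $t_j$ (the else-branch $\gamma$ advances to $t_{j+1}$, mirroring $\exists$'s freedom to pick any element of $\delta_\exists(s_0,Y)$) with, for a fixed $t_j$, a conjunctive sweep over the roles $\rho_0,\dots,\rho_m$ (the then-branch $\beta$ advances to $\rho_{i+1}$, mirroring the requirement that $\exists$ survive every $\forall$-move out of $t_j$). The delicate point is the threaded accumulator $\mu$: the test performed at $(i,j)$ is $\cond{\tau(\delta_\forall(t_j,\rho_i))}{\mu(\rho_i)}{\rho_i}$, and by the marginalisation clause of \Cref{def:semantics} this equals the probability, over a $\rho_i$-successor $j'$ of $i$ \emph{conditioned on $j'$ satisfying everything recorded in $\mu(\rho_i)$}, that $j'$ also satisfies $\tau(\delta_\forall(t_j,\rho_i))$; the subsequent update $\mu(\rho_i)\mapsto\mu(\rho_i)\sqcap\tau(\delta_\forall(t_j,\rho_i))$ (respectively $\mu(\rho_i)\sqcap\lnot\tau(\delta_\forall(t_j,\rho_i))$ in the failure branch) records that outcome for later tests of the same role. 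The crux is to argue that this conditioning makes the repeated $\rho_i$-queries behave exactly like repeated queries against a \emph{single} $\rho_i$-successor drawn from $\rho_i(i,\cdot)$ — which is precisely the shape of an accepted tree in \Cref{def:tree}, where every node has at most one $\rho$-child per role — so that the telescoping product produced by the $\ite$ and $\cond$ semantics collapses into $\sum_{j'}\rho_i(i,j')\cdot(\cdots)$, matching the recursive clause of $P^{\interp_{j'}}$ in \Cref{def:probMeasure}; the inner factor is then supplied by the induction hypothesis applied to the strictly smaller automaton rooted at $\delta_\forall(t_j,\rho_i)$. Two boundary cases need separate care: when $\sum_{j'}\rho_i(i,j')\interp_{j'}(\mu(\rho_i))=0$ the marginalisation defaults to $1$, and I must match this against the \Cref{def:tree} clause in which role $\rho_i$ has no child because $\delta_\forall(s',\rho_i)$ is undefined (so $\forall$ has no legal move and $\exists$ wins vacuously); and bisimulation-minimality of $\mathcal{A}$ is needed so that distinct accepted trees are never double-counted on the probability side. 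Reassembling the two stages and summing over $Y$ then yields $\interp_i(\tau(\mathcal{A}))=\sum_{T\in\mathcal{T}}P^{\interp_i}(T)$, which completes the induction and hence the lemma.
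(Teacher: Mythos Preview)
Your proposal is correct and follows essentially the same route as the paper's proof, which is presented only as a sketch: both proceed by unfolding $\tau(s_0)$ in two stages (the if--then--else tree over atomic concepts yielding the $P^{\interp_i}(Y)$ prefactor, followed by the analysis of $\Lambda(S)$ via the accumulator $\mu$ that simulates a single role-successor through repeated marginalisation), and both identify the collapse of the telescoping $\ite$/$\cond$ product into the recursive clause of \Cref{def:probMeasure} as the crux. Your framing is somewhat tighter than the paper's --- you state an explicit automaton-level induction hypothesis along the acyclicity order and flag the bisimulation-minimality and degenerate-marginalisation boundary cases, whereas the paper illustrates the two stages by small worked examples ($X'=\{x,y\}$ and $R'=\{\rho_0\}$, $S=\{t_0,t_1\}$) without naming the induction explicitly --- but the underlying argument is the same.
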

\begin{proof}(Sketch)
  The construction of $\alpha^*$ is such that every relation and every concept in $\alpha$ is sampled
  once per individual mirroring the probability measure $\mathcal{P}^{\interp_i}$ (Definition~\ref{def:probMeasure}).
  This can be seen in the definition of $\tau(S)$ where $s\in S_\exists$: 
  each atomic concept is sampled, and the formula branches on the result, 
  effectively creating a tree where the leaves describe the set of atomic concepts sampled to be true.
  For example, if $X' = \{x,y\}$, then 
  $$\tau(s) = \ite{x}{\ite{y}{\Lambda^{\{xy\}}_s}{\Lambda^{\{x\}}_s}}{\ite{\{y\}}{\Lambda^{01}_s}{\Lambda^{\emptyset}_s}}$$
  where $\Lambda^Y_s = \Lambda(\delta_\exists(s,Y)$. 
  Expanding the semantics (Definition~\ref{def:semantics}):
  $$\interp_i(\tau_s) = \sum\left\{\begin{array}{l}
    \ell(i,x).\ell(i,y).\interp_i(\Lambda^{\{xy\}}_s)\\
    \ell(i,x).(1-\ell(i,y)).\interp_i(\Lambda^{\{x\}}_s)\\
    (1-\ell(i,x)).\ell(i,y).\interp_i(\Lambda^{\{y\}}_s)\\
    (1-\ell(i,x)).(1-\ell(i,y)).\interp_i(\Lambda^{\emptyset}_s)
  \end{array}\right\}$$
  which agrees with the definition of $P^{\interp_i}(Y)$ (Definition~\ref{def:probMeasure}).

  The definition of $\Lambda(S)$ recursively defines the probability of sampling a set of $R'$-successors
  that will ultimately be accepted by the acyclic alternating automaton. 
  It does this by checking the successors in turn. 
  In the definition of $\Lambda_{(0,0)}(\mu_0)$, the $\rho_0$-successor is sampled 
  to test if the first $\delta_\forall$ state ($t_0$) will lead to an accepting run on that branch:
  $$\alpha = \cond{\tau(\delta_\forall(t_0,\rho_0))}{\top}{\rho_0}.$$
  When $\alpha$ holds, $\mu$ is updated to record that $\tau(\delta_\forall(t_0,\rho_0))$ was true at the $\rho_0$ successor, 
  and the formula $\Lambda_{(0,0)}^{\mu'}$ then samples a $\rho_1$ successor 
  to test if $t$ also leads to an accepting run on the $\rho_1$ branch. 
  When $\alpha$ does not hold, the state $t_0$ will not lead to an accepting run, 
  so all other successors are ignored and attention moves to $t_1$ and the process repeats.
  However, this time when a $\rho_0$ successor is sampled, 
  it should be the same successor that was tested when investigating $t_0$. 
  All things being equal, any $\rho_0$-successor where 
  $\tau(\delta_\forall(t_0,\rho_))$ does not hold is sufficient,
  so in this case the $\rho_1$-successor is marginalised by the possibility:
  $$\gamma = \cond{\tau(\delta_\forall(t_1,\rho_0)}{\lnot\tau(\delta_\forall(t_0,\rho_0))}{\rho_0}.$$
  Expanding the fragment of semantics (Definition~\ref{def:semantics}), assume that $R' = \{\rho_0\}$ 
  and the set of $S_\forall$ successors are $\{t_0,t_1\}$ gives the following calculations:
  \begin{eqnarray*}
      &&\interp_i(\Lambda_{(0,0)}(\mu_0))\\
      &=& \interp_i(\ite{\cond{\alpha_0}{\top}{\rho_0}}{\top}{\cond{\alpha_1}{\lnot\alpha_1}{\rho_0}})\\
      &=& \interp_i(\expect_{\rho_0}\alpha_0)+
         \interp_i(\lnot\expect_{\rho_0}\alpha_0).
           \frac{\sum_{j\in I}\rho_0(i,j).\interp_i(\alpha_1).\interp_i(\lnot\alpha_0)}{\sum_{j\in I}\rho_0(i,j).\interp_i(\lnot\alpha_0)}\\
      &=& \interp_i(\expect_{\rho_0}\alpha_0)+\interp_i(\expect_{\rho_0}(\alpha_1\sqcap\lnot\alpha_0)),
  \end{eqnarray*}
  where $\alpha_0 = \tau(\delta_\forall(t_0,\rho_0))$ and $\alpha_1 = \tau(\delta_\forall(t_1,\rho_0))$.
  Therefore, each probability of an accepting run is summed to give the probability of sampling a functional model recognised by the automaton, 
  as in Definition~\ref{def:probMeasure}.  
\end{proof}

The following theorem is a direct consequence of 
Lemmas~\ref{lem:alt-aut},~\ref{lem:ProbSpace}~and~\ref{lem:alc2adl}
\begin{theorem}\label{thm:adl2probspace}
 Given a formula $\alpha$ of \ALC\, and an aleatoric belief model $\interp_i$,
  there exists: a formula $\alpha^*$ that is logically equivalent to $\alpha$ in \ALC;
  and probability space $(\Omega^{\interp_i},\mathcal{F},\mathcal{P}^{\interp_i})$ where:
  $\Omega^{\interp_i}$ is a set of functional \ALC\ models;
  $\mathcal{F}$ is an algebra over $\Omega^{\interp_i}$ consisting of an element $\hat{\beta}$ for every \ALC\ formula $\beta$; and
  $\mathcal{P}^{\interp_i}$ is a probability measure on $\mathcal{F}$ derived from $\interp_i$.
  This probability space is such that $\mathcal{P}^{\interp_i}(\hat{\alpha}) = \interp_i(\alpha^*)$.
\end{theorem}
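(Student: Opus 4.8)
The theorem merely repackages the three preceding lemmas, so the plan is to (i) fix one signature, one set of formulas $\mathcal{L}$, and one witness $\alpha^*$ against which all three lemmas can be read, and then (ii) supply the single ingredient — the ``logically equivalent to $\alpha$ in \ALC'' clause — that is not stated verbatim as a lemma but follows from them.

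First I would let $X'$ and $R'$ be the (finite) sets of atomic concepts and roles occurring in $\alpha$, put $\alpha$ into positive normal form, and invoke Lemma~\ref{lem:alt-aut} to obtain the acyclic alternating automaton $\mathcal{A}^\alpha$ over $X'$, $R'$ that accepts a pointed functional interpretation $\finterp_j$ exactly when $\finterp_j\models\alpha$, taking it bisimulation minimal as the discussion preceding Definition~\ref{def:tree} requires so that Definition~\ref{def:probMeasure} applies. The witness is $\alpha^* = \tau(\mathcal{A}^\alpha)$ of Definition~\ref{def:alt2adl}. Next I would take $\mathcal{L}$ to be the closure under countable disjunction of the set of all \ALC\ formulas over $X'$, $R'$; this set is automatically closed under complementation and under subformulas, contains $\top$ and $\alpha$, and $\hat{\mathcal{L}}$ therefore contains an element $\hat\beta$ for every \ALC\ formula $\beta$ over this signature. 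By Definition~\ref{def:functional-space} the carrier $\Omega^{\interp_i}$ is a set of functional \ALC\ models and by Definition~\ref{def:probMeasure} the map $\mathcal{P}^{\interp_i}$ is determined solely by the likelihoods and role distributions of $\interp_i$, so $(\Omega^{\interp_i},\hat{\mathcal{L}},\mathcal{P}^{\interp_i})$ already has the announced shape; Lemma~\ref{lem:ProbSpace} certifies it is a probability space. Finally, because $\alpha\in\mathcal{L}$, Lemma~\ref{lem:alc2adl} yields directly $\mathcal{P}^{\interp_i}(\hat\alpha)=\interp_i(\alpha^*)$, which is the closing equation of the theorem.

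It remains to see that $\alpha^*$ is logically equivalent to $\alpha$ in the sense relevant here, namely that it agrees with $\alpha$ on every functional \ALC\ interpretation (these are exactly the models populating $\Omega^{\interp_i}$). I would prove this by specialising the machinery to a degenerate model: given any functional \ALC\ interpretation $\finterp_j$, read it as an aleatoric belief model in which every likelihood is $0$ or $1$ and every role distribution is a point mass. Then $\Omega^{\finterp_j}$ has a single sampling, so $\mathcal{P}^{\finterp_j}(\hat\alpha)\in\{0,1\}$, and it equals $1$ precisely when that sampling satisfies $\alpha$; since the automaton's acceptance game inspects only local concept labels and role successors, $\mathcal{A}^\alpha$ accepts that sampling iff it accepts $\finterp_j$, which by Lemma~\ref{lem:alt-aut} holds iff $\finterp_j\models\alpha$. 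Applying Lemma~\ref{lem:alc2adl} to $\finterp_j$ then gives $\finterp_j(\alpha^*)=\mathcal{P}^{\finterp_j}(\hat\alpha)\in\{0,1\}$, equal to $1$ iff $\finterp_j\models\alpha$, which is the desired equivalence.

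I expect the only friction in writing this up to be bookkeeping rather than mathematics: ensuring the automaton $\mathcal{A}^\alpha$, the formula $\alpha^* = \tau(\mathcal{A}^\alpha)$, the finite signature $X'$, $R'$, and the formula set $\mathcal{L}$ are literally the same objects across the three invocations, and checking in the degenerate case that a $0/1$ point-mass belief model really does collapse $\mathcal{P}^{\finterp_j}(\hat\alpha)$ to a two-valued quantity, so that Lemma~\ref{lem:alc2adl} delivers a classical truth value. All of the genuinely hard work — the alternating-automaton construction, the probability-measure construction, and above all the marginalisation bookkeeping inside $\tau$ that forbids a role successor from being effectively sampled twice — has already been carried out in Lemmas~\ref{lem:alt-aut}, \ref{lem:ProbSpace} and \ref{lem:alc2adl}.
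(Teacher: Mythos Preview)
Your proposal is correct and matches the paper's approach: the paper states the theorem as ``a direct consequence of Lemmas~\ref{lem:alt-aut},~\ref{lem:ProbSpace}~and~\ref{lem:alc2adl}'' and gives no further argument. Your degenerate-model argument for the ``logically equivalent to $\alpha$ in \ALC'' clause actually goes beyond what the paper supplies in the proof itself; the paper only gestures at this idea in the paragraphs following the theorem (projecting concept probabilities to $0$/$1$ and citing \cite{icla} for a correctness proof in the modal case), so on that point you are more thorough than the original.
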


Note, this correspondence only goes in one direction since the translation from $\ALC$ to $\ADL$ is not surjective. 
A reverse translation is not possible as $\ADL$ can represent properties not expressible by $\ALC$: 
for example $C\sqcap C \neq C$ in $\ADL$. 
However, it is possible to give a correctness preserving translation from $\ALC$ to $\ADL$, 
and \cite{icla} presents a correctness proof for the modal case.

Finally a direct encoding of \ALC\ may be given in \ADL\ by projecting all concept probabilities to 0 or 1, 
and using a uniform distribution to simulate quantification over roles.
This construction is sketched below (see \cite{icla} for a correctness proof in the modal case).

Let $\mathcal{I}_i = (I, c, r', i)$ be an interpretation of $\ALC$, where $I$ is finite,
and define $\interp^{\mathcal{I}}_i = (I, r, \ell, i)$ to be the pointed aleatoric belief model,
where 
\begin{itemize}
  \item $\rho(i,j) = \frac{1}{|\{k\in I\ |\ (i,k)\in r'(\rho))\}|}$ if $(i,j)\in r'(\rho)$ and $0$ otherwise.
  \item $\ell(i,A) = 1$ if $i\in c(A)$ and $0$ otherwise.
\end{itemize}

Once atom distributions have been mapped to Booleans and role distributions have been made uniform,
it is straightforward to show that the abbreviations in Table~\ref{tab:abbreviations} preserve their meaning
so $\mathcal{I}_i\models \alpha$ if and only if $\interp^{\mathcal{I}}_i(\alpha) = 1$.

\section{Learning}\label{sect:learning}

An aleatoric belief model describes an agent's beliefs and prior assumptions 
and the agent may update these beliefs based on observations, via Bayesian conditioning.
This section will introduce two learning mechanisms, {\em role learning} and {\em concept learning} whereby 
an agent may update the distribution of individuals fulfilling a role, 
and also update the aleatoric probabilities associated with an atomic concept at an individual. 
These mechanisms are unique to $\ADL$ and provides a compelling advantage over alternative
probabilistic description logics \cite{ceylan,lukasiewicz,lutz-DL,riguzzi2015probabilistic,pozzato2019typicalities}. 

In this sense, given a consistent aleatoric belief set, 
an aleatoric belief model acts can be chosen as a Bayesian prior (for example, by assigning a uniform prior to undeclared probabilities). 
Then observations of individuals, concepts, and roles 
can be used to compute posterior likelihoods, which update the aleatoric belief model.
Therefore, even a quite basic belief model can be refined over time to detect and learn 
subtle relationships between concepts and individuals.

The two learning mechanisms discussed here are {\em role learning} and {\em concept learning}.
In this setting, it is assumed that the agent has prior beliefs, 
and these beliefs involve some uncertainty (modelled aleatorically). 
The agent also has an observation, which is a formula of $\ADL$.
While observation can be considered a fact, and not subject to uncertainty,
it is also able to inform the agent about there own beliefs, 
via Bayesian updating.

Consider the example from Subsection~\ref{sbsect:example}, 
with Hector, Igor, and Julia, 
each of whom maybe infected with a virus or not. 
Suppose that this model is a representation of Hector's beliefs,
and also, that Hector is informed via a contact-tracing exercise 
that they have come into contact with an infected person.
This information alone, allows Hector to refine the belief model, through {\em role learning} 

\subsection{Role learning}

Role learning refines the probability distribution associated with a role $\rho$.
For a pointed aleatoric belief model, $\interp_i = (I,r, \ell,i)$, for every $j\in I$,
$\rho(i,j)$ is the prior probability that $j$ fulfils the role of $\rho$ for $i$.
Given an observation which is an $\ADL$ formula of the form $\cond{\alpha}{\top}{\rho}$,
$\interp_j(\alpha)$ is the probability of this observation holding, 
given $j$ fulfils the role of $\rho$ for $i$. 
Via Bayes' rule, it follows that the probability of $j$ fulfilling the role of $\rho$ for $i$, 
{\em given the observation} is:
$$\rho'(i,j) = \frac{\rho(i,j) \cdot \interp_j(\alpha)}{\interp_i(\cond{\alpha}{\top}{\rho})}$$
(the prior probability of $j$ is multiplied by the probability of $\alpha$ given $j$, 
divided by the probability of $\alpha$).

\begin{definition}\label{def:role-learning}
  Let $\interp_i = (I,r,\ell,i)$ be an aleatoric belief model, and $\phi = \cond{\alpha}{\beta}{\rho}$
  an observation, made at $i$. 
  The $\phi${\em -update of} $\interp_i$ is the aleatoric belief model 
  $\interp^\phi_i = (I,r^{i,\alpha},\ell,i)$, where for all $\rho'\neq\rho$ and $j\neq i$,
  $r^{i,\phi}(\rho',j) = r(\rho,j)$ and for all $j\in I$ 
  $$r^{i,\phi}(\rho,i)(j) = \frac{\rho(i,j) \cdot \interp_j(\alpha)}{\interp_i(\cond{\alpha}{\beta}{\rho})}.$$
\end{definition}

Thus an agent with an {\em aleatoric} model of the world may update their {\em epistemic} uncertainty
of the distribution of roles, via Bayesian conditioning. 
The $\phi$-update of $\interp_i$ is the agent's posterior model of the world.

Given the example in Subsection~\ref{sbsect:example}, 
suppose that Hector's belief model is $\interp = (I,r,\ell,i)$, and Hector is 
informed that the contact has tested positive for the virus.
Hector is also informed that the test used has a 10\% false positive rate, 
so Hector's belief model now includes an atomic concept $\mathit{FP}$ that is 0.1 everywhere.
Let $\phi = \cond{\ite{FP}{\top}{V}}{\top}{c}$ and then the $\phi$-update of $\interp_{\hO}$
is computed by:
$$r^{\hO,\phi}(c,\mathtt{H_0})(j) = \frac{c(i,j) \cdot (0.1+0.9 \cdot \interp_j(V)}{\interp_{\hO}(\cond{\ite{FP}{\top}{V}}{\top}{c})}.$$
Substituting in the values from Table~\ref{tab:ex-init-prob}, 
Hector is able to discount the possible individuals without a virus and condition the distribution 
for $contact$ accordingly. The $\phi$-update of $\interp_{\hO}$ is represented in Figure~\ref{fig:phi-update}.

\begin{figure}
  \begin{center}
    \scalebox{0.8}{
    \begin{tikzpicture}
      \draw (2,5) node[circle,draw](h0) {\tiny{ $\begin{array}{c}\hO\\{\mathit V}:0.0\\{\mathit F}:0.1\end{array}$ }};
      \draw (2,2) node[circle,draw](h1) {\tiny{ $\begin{array}{c}\hI\\{\mathit V}:1.0\\{\mathit F}:0.6\end{array}$ }};
      \draw (5,6) node[circle,draw](i0) {\tiny{ $\begin{array}{c}\iO\\{\mathit V}:0.0\\{\mathit F}:0.3\end{array}$ }};
      \draw (7,4) node[circle,draw](i1) {\tiny{ $\begin{array}{c}\iI\\{\mathit V}:1.0\\{\mathit F}:0.8\end{array}$ }};
      \draw (8,2) node[circle,draw](j0) {\tiny{ $\begin{array}{c}\jO\\{\mathit V}:0.0\\{\mathit F}:0.2\end{array}$ }};
      \draw (6,0) node[circle,draw](j1) {\tiny{ $\begin{array}{c}\jI\\{\mathit V}:1.0\\{\mathit F}:0.9\end{array}$ }};
      \draw[dashed] (h0) -- (h1) node[midway, right](pt-h) {$\mathtt{id}$} node[near start, left] {0.1} node[near end,left] {0.9};
      \draw[dashed] (i0) -- (i1) node[midway, below left](pt-i) {$\mathtt{id}$} node[pos=0.1, right] {$\mathbf{0.1}$} node[pos=0.6, right] {$\mathbf{0.9}$};
      \draw[dashed] (j0) -- (j1) node[midway, above left](pt-j) {$\mathtt{id}$} node[pos=0.5, right] {$\mathbf{0.05}$} node[pos=0.95, right] {$\mathbf{0.95}$};

      \draw[thick,<->] (pt-h) -- (pt-i) node[midway, below] {$\mathtt{c}$} node[near start, above] {0.4} node[near end,above] {$\mathbf{0.25}$};
      \draw[thick,<->] (pt-i) -- (pt-j) node[midway, right] {$\mathtt{c}$} node[near start, left] {0.6} node[near end,left] {0.6};
      \draw[thick,<->] (pt-j) -- (pt-h) node[midway, above] {$\mathtt{c}$} node[near start, below] {$\mathbf{0.75}$} node[near end,below] {0.4};
    \end{tikzpicture}
    }
  \end{center}
  \caption{The $\phi$-update of the aleatoric belief model in Figure~\ref{fig:example}, after Hector is told a contact has tested positive for the virus.
  The updated values are bold.}\label{fig:phi-update}
\end{figure}
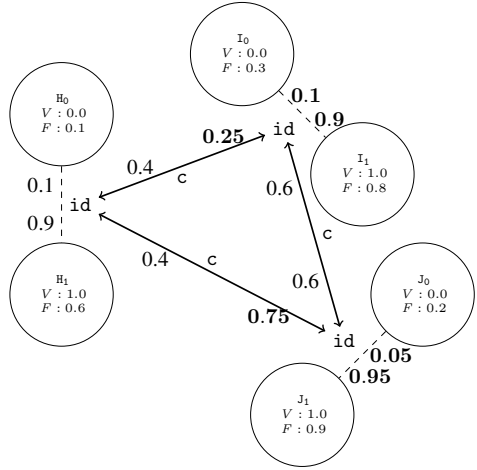

It may seem odd that we ``observe'' a marginalised formula, as there is non concept of conditional probability in a single observation.
However this makes sense when it is interpreted as an instance of selection bias. 
For example, suppose that a citizen presents at a doctors surgery, and some test suggest the citizen is infected with influenza, 
which supports the theory that there is an influenza epidemic (i.e. we observe $\expect_\mathit{citizen}\mathit{Flu}$). 
However, the observation is not of a random person, but rather of a person who elected to attend a doctors surgery, 
and therefore the actual observation is more likely the likelihood the citizen has flu, given that they presented at the surgery:
$\cond{\mathit{Flu}}{\mathit{sympt}}{\mathit{citizen}}$.

\subsection{Concept learning}
Role learning is a natural application of Bayes' law since the learning is applied to 
a probability distribution of possible individuals.
However, the probabilities of atomic concepts are modelled as dice, and hence independent of all other 
variables beyond the possible individual. This means we gain no additional information from applying Bayes' law.
If it was possible to observe atomic concepts directly (and often) 
it would be simple to refine a statistical model of the probabilities. 
Observations in \ADL\ are complex formulas, so it is preferable to find a more general solution.  

Concept learning addresses these issues by introducing new possible individuals in such a way that they do not affect any 
expected values for named individuals but with variations in the aleatoric probability of concepts, 
which may then be learnt via role learning, given arbitrary observations. 

The key to this update is the following observation. 
Suppose that you are playing a game where you must roll a one on a biased die, and you do not know the bias.
The game master offers you a choice you can either roll once as usual, or alternatively you may flip a fair coin.
If the coin lands heads you have two chances to roll a one, but if the coin lands tails, you must roll two ones in a row.
Both these choices have exactly the same chance of success. 
In the first instance you have a $p$ chance of winning, and in the second instance you have a $\frac{1-(1-p)^2+p^2}{2} = p$ 
chance of success. 
However, in the second instance there are two alternatives, one with an increased chance of success 
and one with a diminished chance of success.
These two alternatives may then be subjected to role learning.

\begin{definition}\label{def:A-extension}
  Given a pointed aleatoric belief model $\interp_i = (I,r,\ell,i)$ and some concept to refine, $A\in X$,
  the $A${\em -extension of} $\interp_i$ is the aleatoric belief model $\interp_i^A = (I',r', \ell', i)$, 
  where: 
  \begin{enumerate}
    \item $I'=I\cup \{i^*\}$
    \item For all $j\in I$, for all $\rho\in\roles$, $r'(\rho)(j,i) = r'(\rho)(j,i^*) = r(\rho)(j,i)/2$, and 
      $r'(\rho)(i,j) = r'(\rho)(i^*,j) = \rho(i,j)$ where $j\neq i$.
    \item For all $j\in I$, for all $B\in\concepts$,  
      \begin{enumerate}
        \item $\ell'(j,B) =\ell(j,B)$, if $B\neq A$ or $j\neq i$,
        \item where $B\neq A$, $\ell'(i^*,B) = \ell(i,B)$, 
        \item $\ell'(i^*,A) = \ell(i,A)^2$, and
        \item $\ell'(i, A) = 2\ell(i,A)-\ell(i,A)^2$
      \end{enumerate}
  \end{enumerate}
\end{definition}

Note that in the $A$-extension of $\interp_i$, $i$ and $i^*$ will be related by $\id$, 
and are identical except for the probability assigned to $A$.
This intuition can be formalised via the probability spaces of Lemma~\ref{lem:ProbSpace}.
\begin{lemma}
  Let $\interp_i = (I,r,\ell,i)$ be a pointed aleatoric belief model, and let $A\in X$.
  Given $\mathcal{C} = \interp^A_i$ and $\mathcal{D} = \interp^A_{i'}$, and a set of $\ALC$ formulas, 
  $\mathcal{L}$ as in Definition~\ref{def:probMeasure}, let $(\Omega, \hat{\mathcal{L}},\mathcal{P})$ 
  be a probability space defined by $\Omega = \Omega^{\mathcal{C}}\cup\Omega^{\mathcal{D}}$, and
  for all $\hat{\alpha}\in\hat{\mathcal{L}}$, 
  $$\mathcal{P}(\hat{\alpha}) = \frac{\mathcal{P}^{\mathcal{C}}(\hat{\alpha})+\mathcal{P}^{\mathcal{D}}(\hat{\alpha})}{2}.$$
  Then $(\Omega, \hat{\mathcal{L}},\mathcal{P})$ is a probability space, and furthermore, 
  for all $\hat{\alpha}\in\hat{\mathcal{L}}$, $\mathcal{P}^{\interp_i}(\hat{\alpha}) = \mathcal{P}(\hat{\alpha})$.
\end{lemma}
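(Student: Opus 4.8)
The plan is to reduce the lemma to Lemma~\ref{lem:ProbSpace} together with an induction over the finite trees that generate the measure of Definition~\ref{def:probMeasure}. Write $\mathcal{E} = (I', r', \ell')$ for the underlying model of the $A$-extension of Definition~\ref{def:A-extension} (reading $i'$ there as the fresh individual $i^*$), so that $\mathcal{C} = \mathcal{E}_i$ and $\mathcal{D} = \mathcal{E}_{i^*}$ are the same model pointed at $i$ and at $i^*$. The $A$-extension changes neither $X'$ nor $R'$, so for each \ALC\ formula $\beta$ the automaton $\mathcal{A}^\beta$ of Lemma~\ref{lem:alt-aut}, its recognised language $\mathcal{T}_\beta$, and the $\sigma$-algebra element $\hat{\beta}$ are literally the same objects for $\interp_i$, $\mathcal{C}$ and $\mathcal{D}$.

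Part one is immediate: by Lemma~\ref{lem:ProbSpace}, $(\Omega^{\mathcal{C}}, \hat{\mathcal{L}}, \mathcal{P}^{\mathcal{C}})$ and $(\Omega^{\mathcal{D}}, \hat{\mathcal{L}}, \mathcal{P}^{\mathcal{D}})$ are probability spaces over the common $\sigma$-algebra $\hat{\mathcal{L}}$, and a convex combination $\mathcal{P} = \tfrac12\mathcal{P}^{\mathcal{C}} + \tfrac12\mathcal{P}^{\mathcal{D}}$ of countably additive $[0,1]$-valued measures is again such a measure, with $\mathcal{P}(\hat{\top}) = \tfrac12(1+1) = 1$; hence $(\Omega, \hat{\mathcal{L}}, \mathcal{P})$ is a probability space.

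For $\mathcal{P}^{\interp_i}(\hat\alpha) = \mathcal{P}(\hat\alpha)$, unfold both sides via Definition~\ref{def:probMeasure}: the left side is $\sum_{T\in\mathcal{T}_\alpha} P^{\interp_i}(T)$ and the right side is $\tfrac12\sum_{T\in\mathcal{T}_\alpha}\bigl(P^{\mathcal{E}_i}(T) + P^{\mathcal{E}_{i^*}}(T)\bigr)$, so it suffices to prove, by induction on the height of a finite $X'$-$R'$-tree $T$, the strengthened claim that (a)~$P^{\interp_j}(T) = P^{\mathcal{E}_j}(T)$ for every $j\in I\setminus\{i\}$, and (b)~$P^{\interp_i}(T) = \tfrac12\bigl(P^{\mathcal{E}_i}(T) + P^{\mathcal{E}_{i^*}}(T)\bigr)$; summing (b) over $T\in\mathcal{T}_\alpha$ then yields the lemma. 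In the base case $T$ is a single label $Y\subseteq X'$: since $\ell'(i,B) = \ell'(i^*,B) = \ell(i,B)$ for $B\neq A$ and $\ell'$ agrees with $\ell$ away from $i$, (a) is trivial and, writing $p = \ell(i,A)$ and factoring out the (unchanged) contributions of concepts other than $A$, (b) reduces to the identities $\tfrac12\bigl((2p-p^2) + p^2\bigr) = p$ when $A\in Y$ and $\tfrac12\bigl((1-p)^2 + (1-p^2)\bigr) = 1-p$ when $A\notin Y$ --- exactly the two likelihoods assigned to $A$ at $i$ and $i^*$ in Definition~\ref{def:A-extension}. In the inductive step one uses that $i$ and $i^*$ have identical outgoing role distributions in $\mathcal{E}$ and that every edge of weight $\rho(j,i)$ into $i$ is split as $\rho'(j,i) = \rho'(j,i^*) = \rho(j,i)/2$: hence in each inner sum $\sum_{k\in I'}\rho'(j,k)\,P^{\mathcal{E}_k}(T^\rho)$ of Definition~\ref{def:probMeasure} the $i$- and $i^*$-terms combine to $\tfrac12\rho(j,i)\bigl(P^{\mathcal{E}_i}(T^\rho) + P^{\mathcal{E}_{i^*}}(T^\rho)\bigr)$, which by the induction hypothesis equals $\rho(j,i)\,P^{\interp_i}(T^\rho)$, collapsing that sum to its counterpart for $\interp$; the remaining $\rho$-dependent factors of Definition~\ref{def:probMeasure} are then identical for $\mathcal{E}_i$ and $\mathcal{E}_{i^*}$ and equal to those for $\interp$, so the label factors combine as in the base case and (a), (b) follow.

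The main obstacle is the case where $i$ is reachable from itself, via a self-loop or a longer cycle, which makes $P^{\interp_i}$ appear on both sides of the recursion; this is why the invariant must be the paired clauses (a)--(b) proved simultaneously for all individuals at each height (the recursion in Definition~\ref{def:probMeasure} remains well-founded because $T^\rho$ is strictly shorter than $T$). The intuition worth recording is a coupling: $i$ and $i^*$ are identical twins with respect to outgoing structure and to all concepts other than $A$, each arrival at the pair $\{i,i^*\}$ is an independent fair coin flip, and the effective likelihood of $A$ there is $\tfrac12(2p-p^2) + \tfrac12 p^2 = p$, so collapsing $i^*$ back onto $i$ turns a sample of the mixture $\tfrac12\mathcal{C} + \tfrac12\mathcal{D}$ into a sample of $\interp_i$. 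One could instead argue through Lemma~\ref{lem:alc2adl}, proving $\interp_i(\alpha^*) = \tfrac12\bigl(\mathcal{C}(\alpha^*) + \mathcal{D}(\alpha^*)\bigr)$ by structural induction, but that is more delicate: this identity is false for unrestricted \ADL\ formulas such as $A\sqcap A$, and holds here only because the translation of Definition~\ref{def:alt2adl} evaluates each atomic concept at most once per individual.
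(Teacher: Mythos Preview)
Your proof is correct and is precisely the kind of argument the paper gestures at but does not write out: the paper's own ``proof'' is a single sentence (``This follows from the reasoning above, and it is straightforward to check the conditions of a probability space are met''), where ``the reasoning above'' is the coin-flip identity $\tfrac12\bigl((2p-p^2)+p^2\bigr)=p$ that you isolate in your base case. Your contribution is to supply the missing inductive machinery --- the paired invariants (a) and (b) over tree height, and the observation that the split $\rho'(j,i)=\rho'(j,i^*)=\rho(j,i)/2$ makes the $i$- and $i^*$-summands recombine via (b) inside every role sum --- which is exactly what is needed to propagate that identity through the recursion of Definition~\ref{def:probMeasure}. Your closing remark that the alternative route via Lemma~\ref{lem:alc2adl} is more fragile (because the averaging identity fails for general \ADL\ formulas like $A\sqcap A$) is a nice observation and correctly identifies why the tree-measure formulation is the cleaner vehicle here.
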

This follows from the reasoning above, and it is straightforward to check the conditions of a probability space are met. 

In practice this operation can be applied many times to learn correlations between different concepts. 
However, performing these operations across all individuals and a set of $n$ atomic concepts, 
will lead to a $2^n$ factor increase in the size of the model so once the distribution of a concept has been refined 
the two possible individuals $i$ and $i^*$ can be combined back into a single individual by 
taking the sum of the probabilities of their concepts and roles, weighted by the learnt distribution for $\id$.

In the example of Subsection~\ref{sbsect:example}, 
suppose that the assessment that the likelihood of Hector having a fever is to be reassessed,
based on the observation (or possibly erroneous belief) 
that Hector would have a fever if and only if Hector's contact had a fever. 
A new world $\hO$ is replaced by $\hO^1$ and $\hO^2$ 
where $\hO^1(F) = 2\hO(F)-\hO(f)^2 = 0.84$, 
and $\hO^2(F) = \hO(F)^2 = 0.36$.
The probabilities are then updated via role learning over $\id$, 
given the observation $\phi = \expect_\id{\ite{F}{\expect_c{F}}{\expect_c\lnot F}}$, 
where the relevant fragment of the aleatoric belief model is shown in Figure~\ref{fig:concept-learning}.
Note, the model has been revised to make the example clearer.
\begin{figure}
  \begin{center}
    \scalebox{0.8}{
    \begin{tikzpicture}
      \draw (1,3) node[circle,draw](h0) {\tiny{ $\begin{array}{c}\hO\\{\mathit F}:0.6\end{array}$ }};
      \draw (3,2) node[circle,draw](j1) {\tiny{ $\begin{array}{c}\jI\\{\mathit F}:0.2\end{array}$ }};
      \draw (3,4) node[circle,draw](j0) {\tiny{ $\begin{array}{c}\jO\\{\mathit F}:0.9\end{array}$ }};
      \draw (h0) -- (j0)  node[pos=0.9,left] {0.8};
      \draw (h0) -- (j1)  node[pos=0.9,left] {0.2};

      \draw (6,5) node[circle,draw](h0n) {\tiny{ $\begin{array}{c}\hO\\{\mathit F}:0.84\end{array}$ }};
      \draw (6,1) node[circle,draw](h0s) {\tiny{ $\begin{array}{c}\hO^*\\{\mathit F}:0.36\end{array}$ }};
      \draw (8,2) node[circle,draw](j1n) {\tiny{ $\begin{array}{c}\hI\\{\mathit F}:0.2\end{array}$ }};
      \draw (8,4) node[circle,draw](j0n) {\tiny{ $\begin{array}{c}\jO\\{\mathit F}:0.9\end{array}$ }};
        \draw[dashed] (h0n) -- (h0s) node[midway, right](pt-h) {} node[near start, left] {0.5 (0.42)} node[near end,left] {0.5 (0.58)};
      \draw (pt-h) -- (j1n) node[near end,left] {0.2};
      \draw (pt-h) -- (j0n) node[near end,left] {0.8};
    \end{tikzpicture}
    }
  \end{center}
  \caption{Concept learning applied to the aleatoric belief model in Figure~\ref{fig:example}, 
    where Hector applies the belief that he would only have a fever if and only if a contact had a fever.
    The model on the left is the $F$-extension, and the probabilities in brackets are the values after role learning has been applied to $\id$.}\label{fig:concept-learning}
\end{figure}
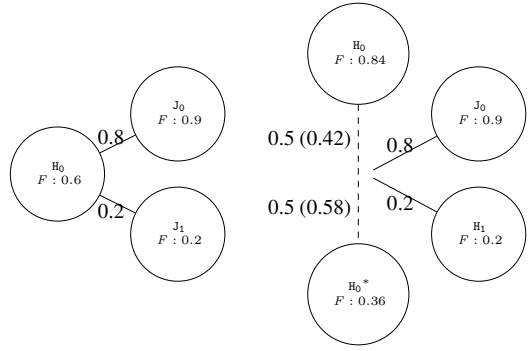
In the individuals $\hO$ and $\hO^*$, role learning can be applied over $\id$. 
The probability of $\phi$ in $\hO$ is approximately $0.4$ 
and the probability of $\phi$ in $\hO^*$ is approximately $0.55$, 
so after role learning has been applied to $\id$ 
the probability of $\hO$ is 0.42
and the probability of $\hO^*$ is 0.58.
Finally, aggregating $\hO$ and $\hO^*$ into a single node by 
taking weighted sums of the likelihoods gives the probability of $F$ to be 0.56.

This example also demonstrates how $\ADL$ can work with complex beliefs, such as $\phi$.

\section{Related work}\label{sec:relatedwork}

There is a substantial amount of work on logics for reasoning about uncertainty \cite{halpern:2003}, 
including \cite{kozen,kooi,vanBenthem, nilsson}, and going back to the works of Ramsey 
\cite{ramsey}, Carnap \cite{carnap} and de Finetti \cite{deFinetti}. 

Probabilistic modal logics have also been studied extensively \cite{halpern:2003,kozen,kooi}.
The approach to reasoning about probabilities used here is to have explicit probabilities in the syntax of the logic.
Therefore, these logics can express propositions such as {\em the chance of rain in 68\%}, or
{\em it is twice as likely to rain as it is to snow}.
These propositions are either true or false, so they reason about probabilities, rather than reasoning probabilistically.
Axioms and model-checking procedures have been provided for these logics, but they tend to be hard to apply in practise,
since the probabilities are explicit and hard to calculate.
By contrast, an aleatoric proposition would be something like {\em rain-today}, which could be evaluated as 0.68, 
but it is an implicit value that may vary with an agent's experience.

{\em Markov Logic Networks} \cite{mln} (generalising Bayesian networks and Markov networks)
address a similar problem of providing a logical interface to machine learning methods.
These approaches attach a probabilistic interpretation to formulas in a fragment of first order logic,
rather than providing a probabilistic variation of first order logic operators.
Therefore, whilst providing some of the benefits of logical approaches in a machine learning context,
there is only a weak coupling between first order deduction and the probabilistic semantics.

There is some commonality in purpose with {\em probabilistic logic programming} 
\cite{lukasiewicz1998probabilistic,de2007problog}. 
However, the concepts are constrained to be Horn clauses, 
where atomic formula are mutually independent.

There is a growing body of work addressing the need for probabilistic reasoning in knowledge bases. 
In \cite{heinsohn}, an inductive reasoning approach is applied to include probabilities with rules;
in \cite{lutz-DL}, a subjective Bayesian approach is proposed to describe 
the probabilities associated with a concept or role holding; and Lukasiewicz and Straccia \cite{lukasiewicz} 
have proposed a method to include vagueness (or fuzzy concepts \cite{zadeh}) in descriptions logics.
Probabilistic extensions of description logics have also been proposed by 
Rigguzzi et al \cite{riguzzi2015probabilistic} and Pozzato \cite{pozzato2019typicalities}.
These approaches extend knowledge bases to include probabilistic assertions and axioms, 
and provide an extended syntax for querying probability thresholds. 
Some work on learning parameters and structure of knowledge bases via probabilistic description logics has been done, 
including Ceylan and Penaloza \cite{ceylan}, who have proposed a Bayesian Description Logic that combines a 
basic description logic framework with Bayesian networks \cite{BayesNet} for representing uncertainty about facts,
and Ochoa Luna et al \cite{ochoa2011learning} who applied statistical methods to estimate the most likely configuration of a knowledge base.

These approaches are very different to the work presented here, 
as probabilities are not propagated through the roles, 
and they do not permit learning based on the observation of complex propositions.

\section{Conclusion}

This paper has introduced a novel approach for representing uncertain knowledge and beliefs. 
Generalising the description logic \ALC, the aleatoric description logic is able to 
represent complex concepts as independent aleatoric events. 
The events are contingent on {\em possible individuals} 
so they give a subjective Bayesian interpretation of knowledge bases.
This paper has also given computational reasoning methods for aleatoric knowledge bases, 
and shown how aleatoric description logic corresponds to a probability space of functional $\ALC$ models.
Importantly, the syntax of \ADL\ does not include explicit probabilities so these do not 
need to be known a priori, and can be learnt and integrated into the aleatoric belief set based on observations.
The aleatoric concepts and roles enable a simple learning framework where
agents are able to update their beliefs based on the observations of complex propositions.

\label{sect:bib}
\bibliographystyle{plain} 

\begin{thebibliography}{10}

\bibitem{baader2003description}
Franz Baader, Diego Calvanese, Deborah McGuinness, Peter Patel-Schneider,
  Daniele Nardi, et~al.
\newblock {\em The description logic handbook: Theory, implementation and
  applications}.
\newblock Cambridge university press, 2003.

\bibitem{calvanese}
Diego Calvanese, Giuseppe De~Giacomo, Domenico Lembo, Maurizio Lenzerini, and
  Riccardo Rosati.
\newblock Data {C}omplexity of {Q}uery {A}nswering in {D}escription {L}ogics.
\newblock {\em Artificial Intelligence}, 195:335--360, 2013.

\bibitem{carnap}
Rudolf Carnap.
\newblock On {I}nductive {L}ogic.
\newblock {\em Philosophy of science}, 12(2):72--97, 1945.

\bibitem{ceylan}
Ismail~Ilkan Ceylan and Rafael Penaloza.
\newblock The {B}ayesian {D}escription {L}ogic $\mathcal{BEL}$.
\newblock In {\em International Joint Conference on Automated Reasoning}, pages
  480--494. Springer, 2014.

\bibitem{deFinetti}
Bruno De~Finetti.
\newblock {\em Theory of {P}robability: {A} critical introductory treatment}.
\newblock John Wiley \& Sons, 1970.

\bibitem{de2007problog}
Luc De~Raedt, Angelika Kimmig, and Hannu Toivonen.
\newblock Problog: A probabilistic prolog and its application in link
  discovery.
\newblock In {\em IJCAI}, volume~7, pages 2462--2467. Hyderabad, 2007.

\bibitem{icla}
Tim French, Andrew Gozzard, and Mark Reynolds.
\newblock A {M}odal {A}leatoric {C}alculus for {P}robabilistic {R}easoning.
\newblock In {\em Indian Conference on Logic and Its Applications}, pages
  52--63. Springer, 2019.

\bibitem{lutz-DL}
V{\'\i}ctor Guti{\'e}rrez-Basulto, Jean~Christoph Jung, Carsten Lutz, and Lutz
  Schr{\"o}der.
\newblock Probabilistic {D}escription {L}ogics for {S}ubjective {U}ncertainty.
\newblock {\em Journal of Artificial Intelligence Research}, 58:1--66, 2017.

\bibitem{gygax1989}
Gary Gygax and David Cook.
\newblock {\em The Dungeon Master Guide, No. 2100, 2nd Edition (Advanced
  Dungeons and Dragons)}.
\newblock TSR, Inc, 1989.

\bibitem{halpern:2003}
J.Y. Halpern.
\newblock {\em Reasoning about Uncertainty}.
\newblock MIT Press, Cambridge MA, 2003.

\bibitem{heinsohn}
Jochen Heinsohn.
\newblock Probabilistic {D}escription {L}ogics.
\newblock In {\em Uncertainty Proceedings 1994}, pages 311--318. Elsevier,
  1994.

\bibitem{josang2016subjective}
Audun J{\o}sang.
\newblock {\em Subjective logic}.
\newblock Springer, 2016.

\bibitem{kolmogorov}
Andre{\u\i}~Nikolaevich Kolmogorov and Albert~T Bharucha-Reid.
\newblock {\em Foundations of the {T}heory of {P}robability: Second English
  Edition}.
\newblock Courier Dover Publications, 2018.

\bibitem{kooi}
Barteld~P Kooi.
\newblock Probabilistic {D}ynamic {E}pistemic {L}ogic.
\newblock {\em Journal of Logic, Language and Information}, 12(4):381--408,
  2003.

\bibitem{kozen}
Dexter Kozen.
\newblock A {P}robabilistic {PDL}.
\newblock {\em Journal of Computer and System Sciences}, 30(2):162--178, 1985.

\bibitem{loding2000alternating}
Christof Loding and Wolfgang Thomas.
\newblock Alternating automata and logics over infinite words.
\newblock In {\em IFIP International Conference on Theoretical Computer
  Science}, pages 521--535. Springer, 2000.

\bibitem{lukasiewicz1998probabilistic}
Thomas Lukasiewicz.
\newblock Probabilistic logic programming.
\newblock In {\em ECAI}, pages 388--392, 1998.

\bibitem{lukasiewicz}
Thomas Lukasiewicz and Umberto Straccia.
\newblock Managing {U}ncertainty and {V}agueness in {D}escription {L}ogics for
  the {S}emantic {W}eb.
\newblock {\em Web Semantics: Science, Services and Agents on the World Wide
  Web}, 6(4):291--308, 2008.

\bibitem{nilsson}
Nils~J Nilsson.
\newblock Probabilistic {L}ogic.
\newblock {\em Artificial intelligence}, 28(1):71--87, 1986.

\bibitem{ochoa2011learning}
Jos{\'e}~Eduardo Ochoa-Luna, Kate Revoredo, and F{\'a}bio~Gagliardi Cozman.
\newblock Learning probabilistic description logics: A framework and
  algorithms.
\newblock In {\em Mexican International Conference on Artificial Intelligence},
  pages 28--39. Springer, 2011.

\bibitem{BayesNet}
Judea Pearl.
\newblock Causality: {M}odels, {R}easoning, and {I}nference.
\newblock {\em Econometric Theory}, 19(675-685):46, 2003.

\bibitem{pozzato2019typicalities}
Gian~Luca Pozzato.
\newblock Typicalities and probabilities of exceptions in nonmotonic
  description logics.
\newblock {\em International Journal of Approximate Reasoning}, 107:81--100,
  2019.

\bibitem{ramsey1926truth}
Frank~P Ramsey.
\newblock Truth and probability (1926).
\newblock {\em The Foundations of Mathematics and other Logical Essays}, pages
  156--198, 1931.

\bibitem{ramsey}
Frank~P Ramsey.
\newblock Truth and {P}robability.
\newblock In {\em Readings in Formal Epistemology}, pages 21--45. Springer,
  2016.

\bibitem{mln}
Matthew Richardson and Pedro Domingos.
\newblock Markov logic networks.
\newblock {\em Machine learning}, 62(1-2):107--136, 2006.

\bibitem{riguzzi2015probabilistic}
Fabrizio Riguzzi, Elena Bellodi, Evelina Lamma, and Riccardo Zese.
\newblock Probabilistic description logics under the distribution semantics.
\newblock {\em Semantic Web}, 6(5):477--501, 2015.

\bibitem{vanBenthem}
Johan Van~Benthem, Jelle Gerbrandy, and Barteld Kooi.
\newblock Dynamic {U}pdate with {P}robabilities.
\newblock {\em Studia Logica}, 93(1):67, 2009.

\bibitem{zadeh}
Lotfi~A Zadeh.
\newblock Fuzzy {S}ets.
\newblock In {\em Fuzzy Sets, Fuzzy Logic, And Fuzzy Systems: Selected Papers
  by Lotfi A Zadeh}, pages 394--432. World Scientific, 1996.

\end{thebibliography}

\providecommand{\noopsort}[1]{}

%
%
%
%
%

\end{document}